\pdfoutput=1
\documentclass{article}

\PassOptionsToPackage{square,numbers}{natbib}

\usepackage[preprint]{rica}

\usepackage[utf8]{inputenc}
\usepackage[T1]{fontenc}
\usepackage{hyperref}
\usepackage{url}
\usepackage{booktabs}
\usepackage{multirow}
\usepackage{amsmath}
\usepackage{amssymb}
\usepackage{amsfonts}
\usepackage{amsthm}
\usepackage{thmtools,thm-restate}
\usepackage{nicefrac}
\usepackage{microtype}
\usepackage{xcolor}
\usepackage{graphicx}
\usepackage{wrapfig}
\usepackage{placeins}
\usepackage{cleveref}
\usepackage{subcaption}
\usepackage{algorithm}
\usepackage{algorithmic}
\usepackage{bm}
\usepackage{listings}
\usepackage{tikz}
\usetikzlibrary{positioning, arrows.meta, calc}

\definecolor{codebg}{RGB}{245,247,250}
\definecolor{codecomment}{RGB}{106,153,85}
\definecolor{codekw}{RGB}{86,156,214}
\definecolor{codestring}{RGB}{206,145,120}
\definecolor{codeident}{RGB}{156,220,254}
\newtheorem{theorem}{Theorem}

\newtheorem{definition}[theorem]{Definition}

\DeclareMathOperator{\Ric}{Ric}

\DeclareMathOperator{\tr}{tr}

\newcommand{\R}{\mathbb{R}}
\newcommand{\M}{\mathcal{M}}

\newcommand{\x}{\mathbf{x}}

\newcommand{\s}{\mathbf{s}}
\newcommand{\e}{\mathbf{e}}
\newcommand{\z}{\mathbf{z}}
\newcommand{\p}{\mathsf{p}}

\newcommand{\covhess}{\nabla^2}
\newcommand{\zero}{\mathbf{0}}

\renewcommand{\acksection}{\section*{Acknowledgments}}

\graphicspath{{figures/}}

\title{Disentanglement Beyond Generative Models with Riemannian ICA}

\author{%
  Edmond Cunningham \\
  University of Massachusetts Amherst \\
  \texttt{edmondcunnin@umass.edu}
}

\begin{document}

\maketitle

\begin{abstract}
  There is a gap between the theoretical foundations of disentanglement and the practice of modern representation learning.
  Existing theoretical frameworks, particularly Independent Component Analysis (ICA) and its nonlinear variants, assume a generative model with statistically independent latent variables underlying the data so that disentanglement amounts to identifying the latents that could have generated the data.
  This generative framework is interpretable and theoretically justified, but its strong modeling assumptions make it difficult to apply to modern representation learning.
  Modern pretrained encoders often learn features that exhibit disentangled properties without making generative assumptions, yet there is no general theory for interpreting these features as independent factors of variation.
  We take a step toward such a theory by introducing Riemannian ICA (RICA), which replaces ICA's global generative model with local geometric structure.
  RICA is founded on the observation that in ICA, the factors of variation underlying a data point can be understood through radial curves emanating from the point that map to axis-aligned lines in the latent space.
  We formalize this perspective using tools from Riemannian geometry and introduce our theory in a way that is consistent with the existing generative approach.
  Our main contribution is the \emph{disentanglement tensor}, which encodes a second-order notion of disentanglement that we call pointwise disentanglement.
  This tensor depends on the Hessian of the data log likelihood as well as the Ricci curvature induced by the model.
  In a controlled source recovery setting with known ground-truth sources, RICA recovers sources across several manifolds, while the success of ICA baselines depends on the coordinates used to represent the observations.
  Our work provides a theoretical basis for studying local disentanglement without assuming a global generative model.
\end{abstract}

\section{Introduction}
\label{sec:introduction}

A long-standing goal of representation learning has been to disentangle the factors of variation underlying high-dimensional data \cite{bengio2013representation}.
A representation is said to be \emph{disentangled} when a change in one factor underlying the data corresponds to a change in a single component of the representation \cite{bengio2013representation,hyvarinen2023nonlinear}.
For example, varying the pose of a face image should move exactly one component of its representation, and the components encoding eye color and hair length should remain fixed \cite{karras2019style}.
Such representations are interpretable because each component corresponds to a distinct property of the data, and it has been argued that they should enable downstream supervised learning, transfer learning, few-shot learning, and scientific analysis \cite{bengio2013representation,scholkopf2021toward,hyvarinen2023nonlinear,wang2024disentangled}.
The intuition that disentangled representations should cause semantically meaningful changes to data while simultaneously being useful for downstream tasks has motivated the study of disentanglement through the lens of generative models \cite{hyvarinen2000ica,higgins_beta-vae_2017,chen2016infogan,hyvarinen2023nonlinear,reizinger2022embrace,gresele2021independent}.
These approaches assume that data is generated from a set of independent latent variables and that the latents can also be recovered from data.
After fitting the generative model to real data, the latents can be interpreted as factors of variation directly, which enables applications such as controllable editing \cite{chen2016infogan,higgins_beta-vae_2017,wu2021stylespace,yang_causalvae_2021} and counterfactual analysis \cite{dombrowski2021diffeomorphic}, as well as theoretically justified statistical analyses of the recovered latent distribution \cite{reizinger2022embrace,chen_isolating_2018}.
Despite their interpretability, these generative approaches have not been shown to consistently deliver representations that are useful for downstream tasks in practice \cite{locatello2020sober}.
On the other hand, models based on self-supervised learning \cite{radford2021learning,oquab2024dinov,assran2023self,wei2024towards} learn features with properties that are consistent with disentanglement even though they were not explicitly built for it \cite{cunningham2023sparseautoencodershighlyinterpretable,mueller2025isolation}.
But despite their downstream utility, theoretically justified interpretations of these features remain an open problem \cite{mikolov-etal-2013-linguistic,park2024linearrepresentationhypothesisgeometry,sharkey2025open,cunningham2023sparseautoencodershighlyinterpretable,mueller2025isolation}.
In this paper, we offer one such interpretation by building upon a classical framework for generative disentanglement, called independent component analysis.

Independent component analysis (ICA) \cite{comon1994independent,hyvarinen2000ica}, and its nonlinear extensions \cite{hyvarinen2023nonlinear,buchholz2022function,gresele2021independent,zheng2022identifiability,hyvarinen2019nonlinear}, are generative frameworks for disentanglement with strong theoretical guarantees that follow from strict modeling assumptions \cite{comon1994independent,zheng2022identifiability,buchholz2022function}.
The most prevalent assumption is that the map between data and latents is globally invertible.
This assumption enables the strongest form of latent identification because each data point corresponds to a unique latent vector, and the latent distribution can be computed directly through the change of variables formula \cite{hyvarinen2000ica}.
However, the extra modeling assumptions required for this identification either use auxiliary information that is not available in every setting \cite{hyvarinen2019nonlinear} or impose rigid modeling constraints that may be misspecified in practice \cite{zheng2022identifiability,buchholz2022function,horan2021unsupervised}.
These restrictions make ICA difficult to use as a theory for modern disentangled representations, where the goal is often to interpret useful features rather than recover a globally valid generative model.
Our main insight is that the interpretation offered by generative models does not require global invertibility, but can instead be built from \emph{local invertibility} around each data point.

A generative interpretation of disentanglement requires that variations in the data be attributable to variations of individual latent components.
To understand this, observe that a variation of a data point traces a curve through the point.
If the encoder maps this curve to an axis-aligned line in the latent space and is locally invertible at the point, then we can interpret the component associated with this axis as the factor of variation underlying the data point.
Disentanglement is then a question of whether the data's probability density function factors along these curves.
We use tools from Riemannian geometry \cite{lee2018introduction} to make this picture precise.

In this paper, we introduce Riemannian ICA (RICA), a local geometric analogue of ICA that defines independent factors of variation from a data density and a Riemannian metric.
RICA is motivated by pretrained models that can be represented by Riemannian metrics, and interprets \emph{Riemann normal coordinates} \cite{lee2018introduction} as latent variables that are defined locally around a chosen base point.
This modeling choice preserves the local change-of-variables structure used by ICA, can be applied to non-invertible encoder models through the pullback metric, and gives a clear geometric interpretation of the factors of variation as geodesics through the base point.
Since the map to normal coordinates is invertible in a neighborhood of the base point, we are able to directly pushforward the data density through it and use the change of variables formula to obtain a probability density for the latent variables.
If this latent density factors into a product of independent distributions, then the latents have independent components.
However, it is possible that a pretrained model cannot disentangle data in any neighborhood of a point.
We therefore introduce \emph{pointwise disentanglement}, a necessary pointwise condition for disentanglement that captures second-order statistical independence at the point.
Our main theoretical contribution is a tensor that encodes pointwise disentanglement called the \emph{disentanglement tensor}.
This tensor depends on the data through the second derivatives of the log likelihood function and on the metric through Ricci curvature.
Our algorithm for pointwise disentanglement diagonalizes this tensor, which amounts to solving a generalized eigenvalue problem.
We validate this theory in a controlled source recovery setting where the ground-truth sources, metric, and density are known.
Across the tested manifolds and coordinate charts, RICA recovers the ground-truth sources, whereas the baselines are sensitive to the chart in which the observations are presented.
RICA recasts disentanglement as a local geometric property of a Riemannian metric relative to a data density, rather than as a property of a global invertible generative model.
This shift makes it possible to study disentanglement in pretrained representations without treating them as the latent variables of a generative model.

\section{Preliminaries}
\label{sec:preliminaries}
In this section, we review independent component analysis and introduce the necessary components of Riemannian geometry that we use in our main contributions.
We use bold symbols for vector-valued random variables and unbolded symbols for their components, so that $s_i$ denotes the $i$'th component of $\s$. We use unsubscripted unbolded symbols for functions, and a subscript $x$ for components of geometric objects expressed in data-space coordinates.
We denote data points by $\x \in \mathbb{R}^n$, the data density with respect to the Lebesgue measure by $p_x(\x)$, and latent variables, or \emph{sources}, by $\s \in \mathbb{R}^d$, with probability density $p_s(\s)$.
Each dimension $s_i \in \mathbb{R}$ of $\s$ is called a \emph{factor} or \emph{component}.
We use $f: \mathbb{R}^d \to \mathbb{R}^n$ to denote a decoder that maps latent variables to data, and $h: \mathbb{R}^n \to \mathbb{R}^d$ to denote an encoder that maps data to latents.

While there is no universal notion of disentanglement \cite{wang2024disentangled}, the one we use is closely related to ICA.
\begin{definition}[Disentangled encoder]
  \label{def:disentangled-encoder}
  An encoder $h \colon \R^n \to \R^d$ \emph{disentangles} the data density $p_x(\x)$ if the pushforward measure $p_\s = h_{\#} p_\x$ admits a density that factors across dimensions,
  \begin{equation}
    p_s(\s) = \prod_{i=1}^d p_{s,i}(s_i)
  \end{equation}
\end{definition}
\Cref{def:disentangled-encoder} specifies a property of an encoder that relates to a probability distribution, but does not say how to obtain a disentangled encoder.
ICA studies how to recover such a disentangled encoder from observations, which we review next.

\begin{figure}[t]
  \centering
  \begin{subfigure}[t]{0.32\textwidth}
    \centering
    \includegraphics[width=\textwidth]{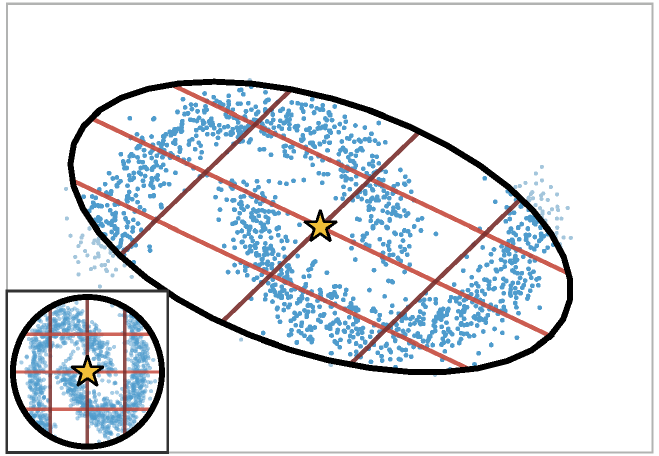}
    \caption{Model for ICA.}
    \label{fig:rica-overview-ica}
  \end{subfigure}\hfill
  \begin{subfigure}[t]{0.32\textwidth}
    \centering
    \includegraphics[width=\textwidth]{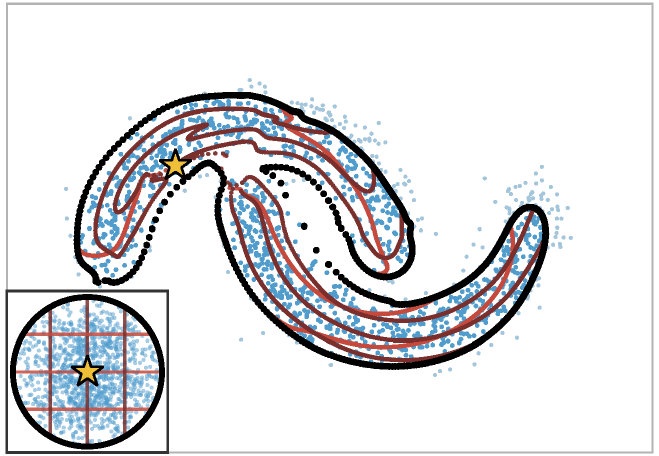}
    \caption{Nonlinear ICA without constraints \cite{dinh2017density}.}
    \label{fig:rica-overview-realnvp}
  \end{subfigure}\hfill
  \begin{subfigure}[t]{0.32\textwidth}
    \centering
    \includegraphics[width=\textwidth]{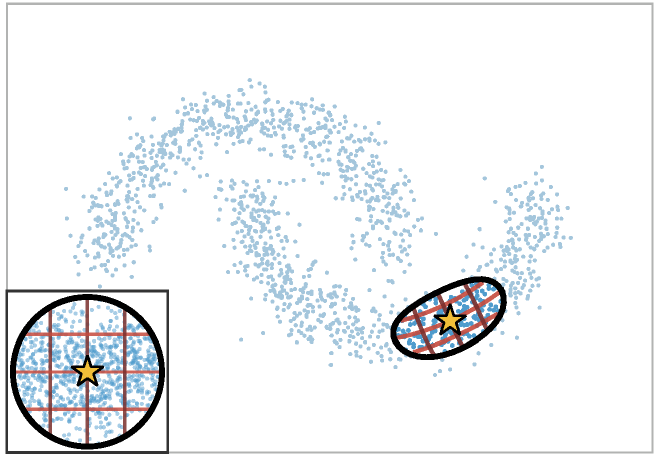}
    \caption{RICA with a smoothed Monge metric and uniform Hausdorff density.}
    \label{fig:rica-overview-rica}
  \end{subfigure}
  \caption{%
    Comparison of ICA, nonlinear ICA, and RICA.
    ICA fits a model that is too simple to capture the structure of the data.
    Nonlinear ICA is able to model the data but requires extra constraints to learn meaningful latent structure.
    RICA uses the local geometry encoded by the metric to find latent structure aligned with the data density.
    See \Cref{app:figure1-monge-metric} for details on the RICA panel.
  }
  \label{fig:rica-overview}
\end{figure}

\subsection{Independent component analysis}
\label{sec:ica}
Independent component analysis (ICA) \cite{comon1994independent,hyvarinen2000ica,hyvarinen2023nonlinear} is a classical algorithm for disentanglement that assumes data is generated by the model
\begin{align}
  \label{eq:ica-model}
  \x = J \s, \quad \s \sim \prod_{i=1}^n p_{s,i}(s_i)
\end{align}
where $J \in \mathbb{R}^{n \times n}$ is an invertible matrix and at most one of the source densities, $p_{s,i}$, is Gaussian.
Under these assumptions, the sources can be identified from data up to permutations and scaling \cite{comon1994independent}.
ICA has had success in problems where data actually satisfies these assumptions, such as the cocktail party problem \cite{hyvarinen2000ica} where observations are generated by a linear mixture of independent audio signals.
However, this linear model is unrealistic for high-dimensional datasets encountered in practice, such as ImageNet \cite{imagenet}, which bars the interpretation of the sources recovered by ICA as factors of variation underlying the data (\Cref{fig:rica-overview-ica}).

To overcome this limitation, nonlinear ICA \cite{nonlinear_ica,hyvarinen2019nonlinear,hyvarinen2023nonlinear} was originally proposed as a more expressive generalization of ICA.
Its model is $\x = f(\s), \;\s \sim p_s(\s)$ where $f: \mathbb{R}^n \to \mathbb{R}^n$ is an invertible function, often parameterized by a normalizing flow \cite{pmlr-v37-rezende15,papamakarios2021normalizing}, and $\s$ has independent components.
The model of nonlinear ICA is flexible enough to represent any non-degenerate distribution of data \cite{papamakarios2021normalizing}, but is not identifiable without extra constraints because infinitely many latent parameterizations can produce the same data distribution \cite{nonlinear_ica,moser1965volume,buchholz2022function}.
This leads to models like the one shown in \Cref{fig:rica-overview-realnvp} that are able to model the data's distribution well but cannot recover meaningful latent structure.
Several constraints have been proposed in pursuit of identifiability, such as incorporating auxiliary variables \cite{hyvarinen2019nonlinear,horan2021unsupervised,zheng2022identifiability,lachapelle2022disentanglement} or making structural assumptions about the decoder \cite{buchholz2022function,buchholz2023learning,gresele2021independent,horan2021unsupervised}.
Despite a push to use nonlinear ICA for disentanglement \cite{hyvarinen2023nonlinear}, research in this area shifted toward other approaches in recent years \cite{scholkopf2021toward,buchholz2023learning,Brady2024interaction,matthes2026mechanistic,rajendran2024causal,rusak2024contrastive}.

A core limitation of ICA and its variants is the assumption that the decoder is globally invertible.
This assumption makes ICA and nonlinear ICA fundamentally misspecified when data has a non-trivial topology \cite{cornish2020relaxing}, and excludes non-invertible neural network architectures as encoders.
There are two key reasons why this restrictive assumption has not been relaxed.
The first reason is the problem setting.
Disentangling the factors of variation underlying data implies both the ability to recover sources from data and the ability to vary data from sources because we must know exactly how the data varies.
This is not possible without invertibility unless we take a fundamentally different approach, such as employing ELBO-based methods like the $\beta$-VAE \cite{higgins_beta-vae_2017,reizinger2022embrace}.
The second reason is the change of variables formula, which relates the density function of data to the density function of the latents.
Under an invertible decoder where $h = f^{-1}$, the change of variables formula is given by
\begin{align}
  \label{eq:change-of-variables-formula}
  \log p_s(\s) = \log p_x(\x) + \log|\det J(\x)|
  \quad \text{ where } \s = h(\x) \text{ and } J(\x) = \frac{df(\s)}{d\s}
\end{align}
If $h$ is not invertible, then the formula no longer holds, making the analysis of $p_s(\s)$ considerably more difficult \cite{balestriero2025gaussian}.
As a result, non-invertible pretrained encoders lack the generative structure needed to analyze whether their features define independent factors of variation.

Our main insight is that disentanglement is fundamentally a local question about how an individual data point varies with respect to latent variations, so global invertibility is not actually required (\Cref{fig:rica-overview-rica}).
The intuitive picture we adopt is to think about how position on a curve in the data space corresponds to an axis-aligned variation in the latent space.
Suppose we have a base point $\x_0 \in \mathbb{R}^n$ for which we compute its sources $\s_0 = h(\x_0)$, and suppose that we vary the $i$'th source a small amount $\Delta s_i$, $\s = \s_0 + \Delta s_i \e_i$, where $\e_i$ is the $i$'th standard basis vector.
Because $h$ may not be invertible, we cannot directly compute the new data point $\x = h^{-1}(\s)$.
What we can do instead is search for a curve $\x(t)$ that starts at $\x_0$ and ends at some $\x$ so that the latent curve, $\s(t) = h(\x(t))$, is a straight line that starts at $\s_0$ and ends at $\s$.
However, the standard ICA toolkit does not contain the necessary tools or even language for formalizing this.
Fortunately, Riemannian geometry does.

\subsection{Riemannian geometry and normal coordinates}
\label{sec:riemannian-geometry}
We review the necessary tools from Riemannian geometry following \cite{Lee00,lee2018introduction}, but adopt a more suggestive notation to make the connections to ICA as direct as possible.
See \Cref{app:geometry-background} for a more detailed review.
We emphasize that every geometric object we introduce in this paper is parameterized and computed in the data space.

A Riemannian manifold $(\M, g)$ is a smooth manifold $\M$ equipped with a positive-definite inner product $g$, called the \emph{Riemannian metric}.
We focus on the setting where $\M = \R^n$ because data is usually collected in a Euclidean space, but our approach is directly applicable to general smooth manifolds.
In this setting, the metric is represented by a symmetric, positive-definite matrix $g_x(\x) \in \mathbb{R}^{n\times n}$.
It is used to define a pointwise inner product between two tangent vectors $u$ and $v$ at some $\x \in \R^n$, whose components are represented by $u_x, v_x \in \R^n$, respectively.
The inner product of $u$ and $v$ with respect to $g$ is given by $g(u,v) = u_x^\top g_x(\x)\, v_x$.

$g$, $u$, and $v$ are \emph{tensors}, meaning that they exist independently of any coordinate system.
In particular, $u$ and $v$ are $(1,0)$-tensors, while $g$ is a $(0,2)$-tensor.
These tensor types determine how their coordinate representations transform under a change of coordinates in order to preserve the meaning of the underlying objects.
If $\phi \colon \mathbb{R}^n \to \mathbb{R}^n$ is a smooth, invertible function where $\x = \phi(\z)$, then the components of the tangent vectors and metric in $z$-coordinates are related by
\begin{equation}
  \label{eq:metric-transformation-rule}
  u_z =
  \frac{\partial \phi(\z)}{\partial \z}^{-1}
  u_x, \quad
  v_z =
  \frac{\partial \phi(\z)}{\partial \z}^{-1}
  v_x, \quad
  g_z(\z) =
  \frac{\partial \phi(\z)}{\partial \z}^{\top}
  g_x(\x)
  \frac{\partial \phi(\z)}{\partial \z}
\end{equation}
These transformation rules preserve the inner product $g(u,v) = u_x^\top g_x(\x)v_x = u_z^\top g_z(\z)v_z$.
Additionally, this inner product induces a notion of \emph{$g$-orthogonality}.
Suppose $J_x \in \mathbb{R}^{n \times n}$ is a matrix whose columns represent components of $n$ tangent vectors.
Then $J_x$ is $g$-orthogonal if $J_x^\top g_x(\x)\, J_x = I_n$.

A Riemannian metric determines distances on the manifold.
Let $\x_0, \x \in \mathbb{R}^n$ be two points.
If the points are sufficiently close, then there exists a unique shortest path between them called a \emph{geodesic}.
Geodesics are completely characterized by their base point $\x_0$ and an initial velocity $v_x \in \mathbb{R}^n$.
The map that sends the initial velocity to the point reached by the geodesic after $1$ unit of time is called the \emph{exponential map}, and is denoted by $\x = \exp_{\x_0}(v_x)$.
In a sufficiently small neighborhood of $\x_0$, the exponential map is invertible, and its inverse is called the \emph{logarithmic map}, $\log_{\x_0}(\x)$, which recovers the initial velocity from an endpoint $\x$.
$\log_{\x_0}(\x)$ can be computed directly from the metric $g_x$, but is computationally expensive in general.
Under the Euclidean metric, the logarithmic map is simply $\log_{\x_0}(\x) = \x - \x_0$.

The preceding machinery leads to \emph{Riemann normal coordinates}, which are an orthonormal coordinate system at $\x_0$.
Normal coordinates are defined on a normal neighborhood of $\x_0$, where the exponential map is invertible \cite{lee2018introduction}.
They are unique up to the choice of a $g$-orthogonal matrix that orients the coordinate system.
Let $J_x \in \mathbb{R}^{n \times n}$ denote this orientation matrix.
Then the normal coordinate map, also called the \emph{normal coordinate chart}, is given by
\begin{align}
  \label{eq:normal-coordinate-chart}
  h_{\scriptstyle \x_0, J_x}(\x) := J_x^{-1}\log_{\x_0}(\x)
\end{align}
We denote these normal coordinates by $\s = h_{\scriptstyle \x_0, J_x}(\x)$.
At the base point, the normal coordinate chart satisfies $\zero = h_{\scriptstyle \x_0, J_x}(\x_0)$ and $\frac{\partial}{\partial \s}h_{\scriptstyle \x_0, J_x}^{-1}(\zero) = J_x$.

A useful property of normal coordinates is that the metric is Euclidean at the origin and has zero first derivatives there.
Let $g_s(\zero) = J_x^\top g_x(\x_0) J_x$ be the components of the metric in normal coordinates at the origin.
Then the value, gradient, and Hessian of the log determinant of the metric are given by
\begin{align}
  \label{eq:normal-coordinate-metric}
  \frac{1}{2}\log \det g_s(\zero) = 0,
  \quad
  \frac{\partial \frac{1}{2} \log \det g_s(\zero)}{\partial \s} = \zero,
  \quad
  \frac{\partial^2 \frac{1}{2} \log \det g_s(\zero)}{\partial \s \partial \s^\top} = -\frac{1}{3}\Ric_s(\zero)
\end{align}
where $\Ric_s(\zero) \in \mathbb{R}^{n \times n}$ is the Ricci curvature tensor in normal coordinates at the origin.
The Ricci tensor is a $(0,2)$-tensor, and so it transforms according to \Cref{eq:metric-transformation-rule}.
For simplicity, we use \Cref{eq:normal-coordinate-metric} as an operational definition of Ricci curvature.
In our main contributions, Ricci curvature appears as a correction to the second derivative of the change of variables formula that is needed on a curved manifold.

Lastly, the Hausdorff density \cite{williams2025geodesic} $\rho_x(\x) = p_x(\x) / \sqrt{\det g_x(\x)}$ is an intrinsic density function that is defined with respect to the Riemannian volume element $dV_g = \sqrt{\det g_x(\x)}\, d\x$.
The Hausdorff density gives a coordinate-independent way to express probability, which we use to derive a change of variables formula on a Riemannian manifold.
See \Cref{app:sec:hausdorff-density} for more details.

\section{Method}
\label{sec:method}

In this section, we present our contributions.
Our method takes as input a probability density function, $p_x(\x)$, and a Riemannian metric $g_x(\x)$.
There are many ways to construct Riemannian metrics \cite{gruffaz2025riemannian}.
For example, metrics can be constructed from point clouds using diffusion geometry \cite{jones2026computing,bamberger2026riemannian}, from probability distributions \cite{hartmann2022lagrangian,williams2025geodesic}, or from generative models \cite{arvanitidis2018latent,dombrowski2021diffeomorphic}.
Our method is motivated by the setting where one is given a pretrained encoder, $h \colon \R^n \to \R^d$, for which the pullback metric is a natural choice.

Suppose $h \colon \R^n \to \R^d$ is a smooth, pretrained encoder and, for the moment, assume that $d = n$.
The pullback metric of the Euclidean metric on $\R^d$ through $h$ is given by $g_x(\x) = \scriptstyle \tfrac{\partial h(\x)}{\partial \x}^\top \tfrac{\partial h(\x)}{\partial \x}$.
When $h$ is locally invertible, this metric measures the length of a path in data space by the Euclidean length of its image under $h$.
This is consistent with the intuitive picture of factors of variation at the end of \Cref{sec:ica} because local geodesic variations under the pullback metric correspond to linear changes in the latent space.
When $d \neq n$, the metric may be degenerate, so we adopt a simple heuristic to ensure that it has full rank.
We use the metric $g_x(\x) = (1 - \lambda)I_n + \lambda \scriptstyle \frac{\partial h(\x)}{\partial \x}^\top \frac{\partial h(\x)}{\partial \x}$, where $\lambda \in (0, 1)$, to blend the Euclidean metric with the pullback metric.
In the rest of this section, we introduce our contributions using an abstract Riemannian metric.

\subsection{Normal coordinates are adaptive latent variables}
\label{sec:factors-of-variation}

Recall that ICA requires an invertible encoder $h(\x) \mapsto \s$.
Our first contribution is to interpret the normal coordinate chart from \Cref{eq:normal-coordinate-chart} as a \emph{local} encoder for ICA.
Let $h_{\scriptstyle \x_0, J_x}$ be the normal coordinate chart centered at $\x_0$ and oriented by $J_x$.
Because this chart maps the base point to the origin, $h_{\scriptstyle \x_0, J_x}(\x_0) = \zero$, varying the $i$'th component by $\Delta s_i$ gives $\s = \Delta s_i \e_i$, where $\e_i$ is the $i$'th standard basis vector.
The corresponding data variation follows the radial geodesic $\x = \exp_{\x_0}(J_x \s)$.
Thus, the factors of variation through $\x_0$ are geodesics under the metric, formalizing the intuitive picture we described at the end of \Cref{sec:ica}.

Next, we need to understand how probability density fits into the picture.
$h_{\scriptstyle \x_0, J_x}$ is invertible in a neighborhood of $\x_0$, so we can pushforward the restriction of the data density to this neighborhood and apply the change of variables formula.
Therefore, by \Cref{def:disentangled-encoder}, a normal coordinate chart disentangles the data near $\x_0$ when the pushed-forward density factors across its coordinates.
This lets us define what it means for a metric to disentangle data on a neighborhood of $\x_0$.
However, such a disentangling neighborhood may not exist \cite{krishnan2021diagonalizing,tod1992choosing}.
Since we do not know which models fail to disentangle data a priori, we propose to check a necessary condition for disentanglement that we call \emph{pointwise disentanglement}.

\subsection{Pointwise disentanglement}
\label{sec:pointwise-disentanglement}
Pointwise disentanglement examines whether a normal coordinate chart disentangles data directly at its base point.
To formalize this, we first need to express the change of variables formula at the base point.
\begin{restatable}[Pointwise change of variables formula]{proposition}{PointwiseChangeOfVariablesFormula}
  \label{prop:pointwise-change-of-variables-formula}
  Let $g_x$ be a Riemannian metric on $\mathbb{R}^n$, $\x_0 \in \mathbb{R}^n$ be the base point, $J_x \in \mathbb{R}^{n \times n}$ be a $g$-orthogonal matrix, and $h_{\scriptstyle \x_0, J_x}$ be the normal coordinate chart.
  At the origin, the change of variables formula is given by
  \begin{equation}
    \label{eq:normal-change-of-variables-formula}
    \log p_s(\zero) = \log p_x(\x_0) + \log|\det J_x|
    \quad \text{ where } \zero = h_{\scriptstyle \x_0, J_x}(\x_0)
  \end{equation}
  Additionally, the Hausdorff density at the base point equals the latent density, $\rho_x(\x_0) = p_s(\zero)$.
\end{restatable}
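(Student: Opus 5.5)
The plan is to apply the ordinary change of variables formula, \Cref{eq:change-of-variables-formula}, to the normal coordinate chart restricted to a normal neighborhood $U$ of $\x_0$, and then evaluate the result at the base point.

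On $U$ the exponential map is a diffeomorphism, so $h_{\scriptstyle \x_0, J_x}$ is a smooth bijection from $U$ onto its image, with inverse $f(\s) = \exp_{\x_0}(J_x \s)$. Pushing forward the restriction of $p_x$ to $U$ therefore gives a density
\begin{equation*}
  \log p_s(\s) = \log p_x(\x) + \log\Bigl|\det \tfrac{\partial f(\s)}{\partial \s}\Bigr|, \qquad \s = h_{\scriptstyle \x_0, J_x}(\x).
\end{equation*}
Evaluating at $\s = \zero$ uses the two facts recorded after \Cref{eq:normal-coordinate-chart}. First, $h_{\scriptstyle \x_0, J_x}(\x_0) = \zero$, so $\x = \x_0$. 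Second, the Jacobian at the origin equals $J_x$, because $d(\exp_{\x_0})_{\zero}$ is the identity. This gives \Cref{eq:normal-change-of-variables-formula} immediately.

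For the Hausdorff density claim, I will show that $|\det J_x| = 1/\sqrt{\det g_x(\x_0)}$. Taking determinants in the $g$-orthogonality condition $J_x^\top g_x(\x_0) J_x = I_n$ gives
\begin{equation*}
  (\det J_x)^2 \det g_x(\x_0) = 1 .
\end{equation*}
Since $g_x(\x_0)$ is positive definite, its determinant is positive, so $|\det J_x| = (\det g_x(\x_0))^{-1/2}$. Substituting this into \Cref{eq:normal-change-of-variables-formula} yields
\begin{equation*}
  \log p_s(\zero) = \log p_x(\x_0) - \tfrac{1}{2}\log\det g_x(\x_0) = \log \rho_x(\x_0),
\end{equation*}
which is the definition of the Hausdorff density. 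An equivalent way to see this is through \Cref{eq:normal-coordinate-metric}: the metric in normal coordinates is the identity at the origin, so there the Lebesgue density and the Hausdorff density coincide.

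The only point needing care is the middle step: justifying that the change of variables formula applies to a chart that is only locally invertible. The density $p_s$ is defined as the pushforward of the data density restricted to the normal neighborhood, as in \Cref{sec:factors-of-variation}, so we must check two things. We need the chart to be a diffeomorphism on that neighborhood, and we need the derivative identity $d(\exp_{\x_0})_{\zero} = \mathrm{id}$. Both are standard results on the exponential map. Once they hold, the rest of the argument is determinant algebra.
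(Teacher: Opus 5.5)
Your proof is correct. Its main line runs in the opposite direction from the paper's, and your closing aside is essentially the paper's argument.

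The paper starts from the coordinate invariance of the Hausdorff density and writes $\log p_s(\zero) = \log p_x(\x_0) - \tfrac{1}{2}\log\det g_x(\x_0) + \tfrac{1}{2}\log\det g_s(\zero)$. The last term vanishes since $g_s(\zero) = I_n$, which already gives $p_s(\zero) = \rho_x(\x_0)$. The determinant identity from $J_x^\top g_x(\x_0) J_x = I_n$ then turns $-\tfrac{1}{2}\log\det g_x(\x_0)$ into $\log|\det J_x|$.

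You instead apply the Euclidean formula to the decoder $f(\s) = \exp_{\x_0}(J_x\s)$. The $\log|\det J_x|$ form then follows at once from $\tfrac{\partial f}{\partial \s}(\zero) = J_x$, and you use the same determinant identity in reverse to get the Hausdorff claim.

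Your route is more elementary. It needs only the first-order fact $d(\exp_{\x_0})_{\zero} = \mathrm{id}$, not the invariance of $\rho$ or the normal-coordinate form of the metric. It also turns the paper's remark about consistency with \Cref{eq:change-of-variables-formula} into the proof itself.

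The paper's route buys the intermediate identity $\log p_s = \log\rho + \tfrac{1}{2}\log\det g_s$. Read on the whole chart, this separates the intrinsic density from the volume distortion of normal coordinates. It is exactly what gets differentiated twice in the proof of \Cref{thm:disentanglement-tensor}, where the Hessian of $\tfrac{1}{2}\log\det g_s$ at the origin supplies $-\tfrac{1}{3}\Ric_s(\zero)$.

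Your formula $\log p_s(\s) = \log p_x(\x) + \log\bigl|\det \tfrac{\partial f}{\partial \s}(\s)\bigr|$ also holds on the whole chart. To reach the curvature term from it, though, you would have to rewrite $\log\bigl|\det \tfrac{\partial f}{\partial \s}(\s)\bigr| = \tfrac{1}{2}\log\det g_s(\s) - \tfrac{1}{2}\log\det g_x(\x)$ using the $(0,2)$ transformation rule. That step is precisely the invariance of $\rho$ the paper begins with.
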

See \Cref{app:prop:pointwise-change-of-variables-formula} for the derivation.
This expression is consistent with the standard change of variables formula in \Cref{eq:change-of-variables-formula} because $J_x$ is the Jacobian matrix of the map from latents to data, $h_{\scriptstyle \x_0, J_x}^{-1}(\s) \mapsto \x$, at $\s=\zero$.
\Cref{prop:pointwise-change-of-variables-formula} lets us define pointwise disentanglement.
\begin{restatable}[Pointwise disentanglement]{definition}{LogHessianIndependenceProp}
  \label{def:local-independence}
  Let $\x_0 \in \mathbb{R}^n$ be a base point, $J_x \in \mathbb{R}^{n \times n}$ be a $g$-orthogonal matrix, and let $h_{\scriptstyle \x_0, J_x}$ be the corresponding normal coordinate chart.
  $h_{\scriptstyle \x_0, J_x}$ pointwise disentangles the data distribution $p_x$ at the base point $\x_0$ if the latent log-likelihood has a diagonal Hessian at the origin:
  \begin{equation}
    \label{eq:local-independence}
    \frac{\partial^2 \log p_s(\zero)}{\partial \s \partial \s^\top} \text{ is diagonal.}
  \end{equation}
\end{restatable}
It is easy to confirm that this condition follows from statistical independence by differentiating the latent log-likelihood twice.
This definition is closely related to \Cref{def:disentangled-encoder}, except that pointwise disentanglement uses the normal coordinate chart in place of an explicit encoder, and the independence statement is only made to second order at a point instead of globally.
Like \Cref{def:disentangled-encoder}, this definition says nothing about how to choose $J_x$ to make the Hessian diagonal.
The next subsection shows how to choose $J_x$ so that the recovered latents are pointwise disentangled.

\subsection{Disentanglement tensor and Riemannian ICA}
\label{sec:disentanglement-tensor}
Next, we present our primary contribution, which is a tensor that captures pointwise disentanglement.
We call this tensor the \emph{disentanglement tensor} because its diagonalization identifies a normal coordinate chart whose latents are pointwise disentangled.

\begin{figure}[t]
  \centering
  \begin{subfigure}[t]{0.50\textwidth}
    \centering
    \includegraphics[width=\linewidth]{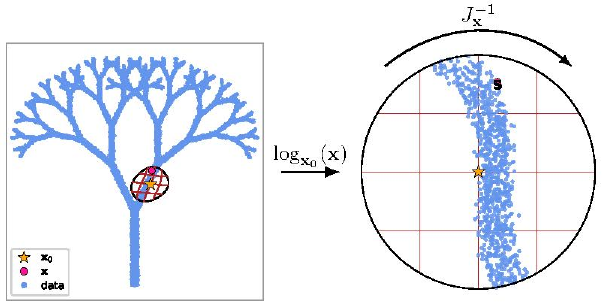}
    \subcaption{RICA consists of a log map followed by a $g$-rotation.}
    \label{fig:log-map-overview-chart}
  \end{subfigure}\hfill
  \begin{subfigure}[t]{0.48\textwidth}
    \centering
    \includegraphics[width=\textwidth]{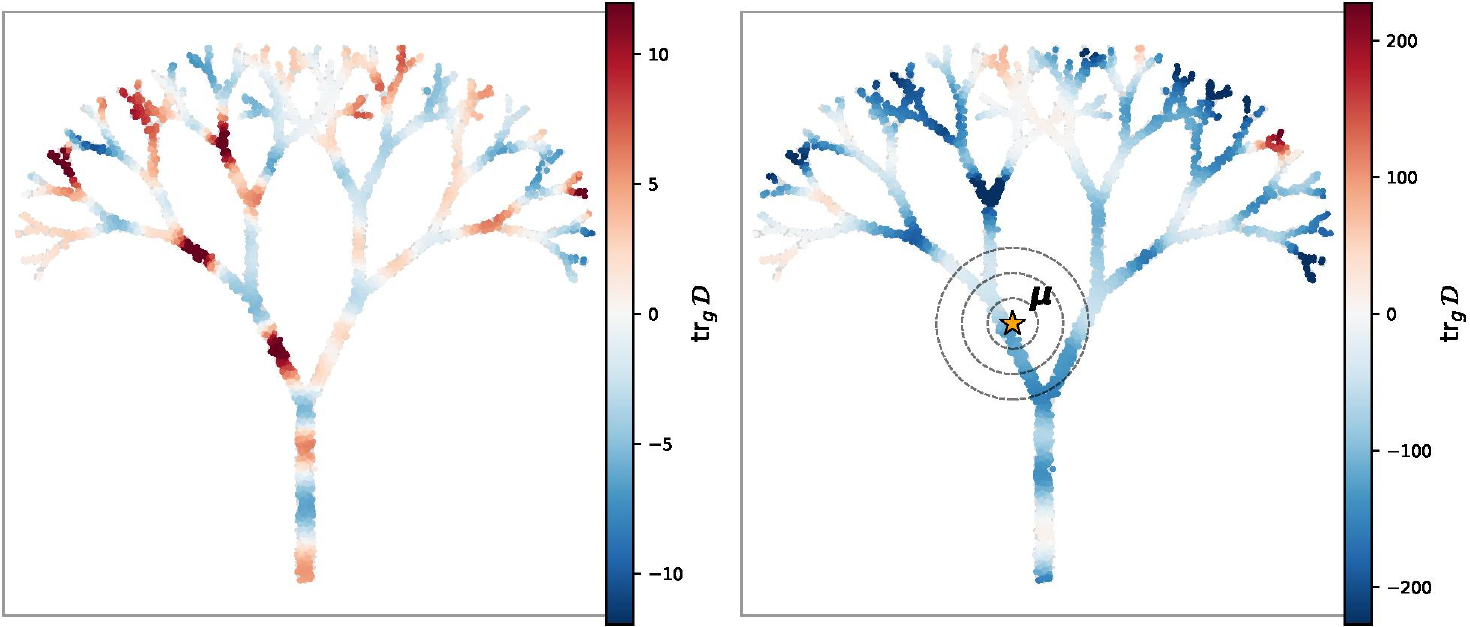}
    \subcaption{$\operatorname{tr}_{g} \mathcal{D}$ for different data densities.}
    \label{fig:log-map-overview-trace}
  \end{subfigure}
  \caption{%
    The blue points are a point cloud from which we construct a diffusion-geometry metric \cite{jones2026computing}.
    Plot (a) illustrates RICA for a Gaussian kernel density estimate of the point cloud.
    At a selected base point $\x_0$, latents are computed as $\s = J_x^{-1} \log_{\x_0}(\x)$.
    Different choices of $J_x$ rotate the latents, and selecting $J_x$ with \Cref{thm:rica} aligns the latents with the coordinate axes to second order.
    Plot (b) shows the trace of the disentanglement tensor $\operatorname{tr}_{g} \mathcal{D}$ for the uniform Hausdorff density on the left and the isotropic Gaussian density on the right.
  }
  \label{fig:log-map-overview}
\end{figure}

\begin{restatable}[Disentanglement tensor]{theorem}{DisentanglementTensor}
  \label{thm:disentanglement-tensor}
  Let $p_x$ and $g_x$ be the probability density function of data and the metric on the data space, respectively, and let $\rho$ denote the Hausdorff density.
  The \emph{disentanglement tensor} is defined as
  \begin{equation}
    \label{eq:disentanglement-tensor}
    \mathcal{D} = \nabla^2 \log \rho - \tfrac{1}{3}\Ric
  \end{equation}
  where $\nabla^2$ is the covariant Hessian under the Levi-Civita connection and $\Ric$ is the Ricci curvature tensor.
  For any base point $\x_0 \in \mathbb{R}^n$ and any $g$-orthogonal matrix $J_x$, let $h_{\scriptstyle \x_0, J_x}$ be the corresponding normal coordinate chart and let $p_s = (h_{\scriptstyle \x_0, J_x})_{\#} p_x$ be the induced latent density on this normal coordinate neighborhood.
  Then the Hessian of the normal log-likelihood is given by
  \begin{equation}
    \label{eq:hessian-normal-log-likelihood}
    \frac{\partial^2 \log p_s(\zero)}{\partial \s \partial \s^\top} = \mathcal{D}_s(\zero)
  \end{equation}
  where $\mathcal{D}_s(\zero)$ is the matrix containing the components of the disentanglement tensor in the normal coordinate system.
\end{restatable}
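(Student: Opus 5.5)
We work in the normal coordinate chart itself. Since $\x = h_{\scriptstyle \x_0, J_x}^{-1}(\s)$ is a diffeomorphism on the normal neighborhood, we first extend \Cref{prop:pointwise-change-of-variables-formula} from the origin to every $\s$ in the neighborhood. The key point is that the Hausdorff density $\rho$ is a scalar function. Because $p_x\, d\x = \rho\, dV_g$, and the Riemannian volume element in $s$-coordinates is $\sqrt{\det g_s(\s)}\, d\s$, the pushforward density is
\begin{equation*}
  p_s(\s) = \rho_s(\s)\sqrt{\det g_s(\s)},
\end{equation*}
where $\rho_s = \rho\circ h_{\scriptstyle \x_0, J_x}^{-1}$ is $\rho$ written in normal coordinates. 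Taking logarithms gives
\begin{equation*}
  \log p_s(\s) = \log \rho_s(\s) + \tfrac12\log\det g_s(\s)
\end{equation*}
on the whole neighborhood. At $\s=\zero$ this reduces to \Cref{prop:pointwise-change-of-variables-formula}, since the first identity in \Cref{eq:normal-coordinate-metric} gives $\log\det g_s(\zero)=0$.

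Next we differentiate this identity twice and evaluate at the origin. The second term contributes $-\tfrac13\Ric_s(\zero)$ directly by the operational definition in \Cref{eq:normal-coordinate-metric}. It remains to identify the ordinary Hessian $\partial^2 \log\rho_s(\zero)/\partial\s\partial\s^\top$ with the components of the covariant Hessian. In any coordinates, the covariant Hessian of a scalar $f$ is
\begin{equation*}
  (\nabla^2 f)_{ij} = \partial_i\partial_j f - \Gamma^k_{ij}\partial_k f .
\end{equation*}
In normal coordinates the Christoffel symbols vanish at the origin, because the metric has zero first derivatives there. Hence
\begin{equation*}
  (\nabla^2 \log\rho)_s(\zero) = \frac{\partial^2 \log \rho_s(\zero)}{\partial \s\partial\s^\top}.
\end{equation*}
Adding the two contributions yields $\partial^2\log p_s(\zero)/\partial\s\partial\s^\top = \mathcal{D}_s(\zero)$. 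Since both $\nabla^2\log\rho$ and $\Ric$ are $(0,2)$-tensors, $\mathcal{D}$ is a well-defined tensor, and its components in normal coordinates are the matrix $\mathcal{D}_s(\zero)$.

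The main technical point is justifying that the Christoffel symbols vanish at the origin of normal coordinates. \Cref{eq:normal-coordinate-metric} only records derivatives of $\log\det g_s$, not of the full metric, so it cannot be used for this. We would instead invoke the standard fact from \cite{lee2018introduction}: in normal coordinates $g_{ij}(\zero)=\delta_{ij}$ and $\Gamma^k_{ij}(\zero)=0$. This holds because radial lines $t\mapsto t\,v$ are geodesics, so the geodesic equation gives $\Gamma^k_{ij}(\zero)v^iv^j=0$ for all $v$, and symmetry of $\Gamma$ in its lower indices then forces $\Gamma^k_{ij}(\zero)=0$. The zero first derivatives of the metric follow from this. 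The remaining care is to treat $p_s$ as the density of the pushforward of the restriction of $p_x$ to the normal neighborhood, so that the change of variables is a genuine diffeomorphism there.
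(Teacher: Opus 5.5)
Your proposal is correct and follows the paper's proof: write $\log p_s = \log \rho_s + \tfrac{1}{2}\log\det g_s$, differentiate twice at the origin, identify the Hessian of $\log\rho$ with the covariant Hessian because the Christoffel symbols vanish at $\zero$, and read off $-\tfrac{1}{3}\Ric_s(\zero)$ from the log-determinant term. Your version is slightly more careful than the paper's, since you establish the identity on the whole normal neighborhood before differentiating (the paper writes it only at $\zero$), and you justify $\Gamma_s(\zero)=0$ via radial geodesics rather than simply asserting it.
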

The derivation in \Cref{app:thm:disentanglement-tensor} follows by differentiating \Cref{eq:normal-change-of-variables-formula} twice and identifying the resulting tensorial terms.
In practice, the disentanglement tensor can be computed in whichever coordinate system is most convenient and then transformed to normal coordinates using the $(0,2)$ tensor transformation rule.
For example, in data-space coordinates, this computation is given by the following proposition.

\begin{restatable}[Coordinate formula for the disentanglement tensor]{proposition}{DisentanglementTensorCoordinates}
  \label{prop:disentanglement-tensor-coordinates}
  Let $p_x$ and $g_x$ be the probability density function and metric on data space, respectively.
  Let $\Gamma_x(\x_0)^k_{ij}$ denote the Christoffel symbols of the Levi-Civita connection induced by $g_x$ at $\x_0$, which are defined by
  \begin{equation}
    \Gamma_x(\x_0)^k_{ij}
    = \sum_{m=1}^n \frac{1}{2} g_x(\x_0)^{km}
    \left(
      \frac{\partial g_x(\x_0)_{im}}{\partial x^j}
      + \frac{\partial g_x(\x_0)_{jm}}{\partial x^i}
      - \frac{\partial g_x(\x_0)_{ij}}{\partial x^m}
    \right)
  \end{equation}
  where $g_x(\x_0)^{km}$ denotes the $(k,m)$ index of the inverse matrix $g_x(\x_0)^{-1}$.
  The $(i,j)$ component of the disentanglement tensor in data space coordinates is given by
  {\small
  \begin{equation}
    \label{eq:disentanglement-tensor-data-coordinates}
    \mathcal{D}_{x}(\x_0)_{ij}
    =
    \frac{\partial^2 r(\x_0)}{\partial x^i \partial x^j}
    -
    \sum_{k=1}^n \Gamma_x(\x_0)^k_{ij}
    \frac{\partial r(\x_0)}{\partial x^k}
    -
    \frac{1}{3}\sum_{a=1}^n \frac{\partial \Gamma_x(\x_0)^a_{ij}}{\partial x^a}
    +
    \frac{1}{3}\sum_{a=1}^n \sum_{b=1}^n \Gamma_x(\x_0)^a_{ib}\Gamma_x(\x_0)^b_{aj}
  \end{equation}
  }
  where $r(\x_0) = \log p_x(\x_0) - \frac{1}{3}\log\det g_x(\x_0)$.
  Additionally, if $J_x$ is a $g$-orthogonal matrix, then the components of the disentanglement tensor in normal coordinates $\s = h_{\scriptstyle \x_0, J_x}(\x)$ are given by
  \begin{equation}
    \label{eq:disentanglement-tensor-normal-coordinates}
    \mathcal{D}_s(\zero) = {J_x}^\top \mathcal{D}_{x}(\x_0) {J_x}
  \end{equation}
\end{restatable}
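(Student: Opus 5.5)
The plan is a direct coordinate computation. \Cref{thm:disentanglement-tensor} defines $\mathcal{D} = \nabla^2 \log\rho - \tfrac13 \Ric$ as a $(0,2)$-tensor, so I will write each of the two pieces in data-space coordinates using standard formulas and then collect terms. I use $\log\rho = \log p_x - \tfrac12 \log\det g_x$ from the definition of the Hausdorff density in \Cref{sec:riemannian-geometry}. Throughout, all quantities are evaluated at $\x_0$ and I write $\partial_i = \partial/\partial x^i$.

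\textbf{Step 1: the covariant Hessian.} For any smooth function $f$, the covariant Hessian under the Levi-Civita connection has components
\begin{equation*}
(\nabla^2 f)_{ij} = \partial_i\partial_j f - \textstyle\sum_k \Gamma^k_{ij}\,\partial_k f .
\end{equation*}
I apply this with $f = \log p_x - \tfrac12 \log\det g_x$.

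\textbf{Step 2: the Ricci tensor.} I take the standard coordinate expression
\begin{equation*}
\Ric_{ij} = \partial_a \Gamma^a_{ij} - \partial_j \Gamma^a_{ia} + \Gamma^a_{ab}\Gamma^b_{ij} - \Gamma^a_{jb}\Gamma^b_{ia},
\end{equation*}
with summation over repeated indices. I then invoke the classical contracted identity $\Gamma^a_{ia} = \partial_i\big(\tfrac12\log\det g_x\big)$, which follows from Jacobi's formula for the derivative of a determinant. This converts the two "trace" terms of the Ricci tensor into derivatives of $\ell := \tfrac12\log\det g_x$:
\begin{equation*}
\partial_j\Gamma^a_{ia} = \partial_i\partial_j \ell, \qquad \Gamma^a_{ab}\Gamma^b_{ij} = \Gamma^b_{ij}\,\partial_b \ell .
\end{equation*}
Since the operational definition in \Cref{eq:normal-coordinate-metric} ties $\Ric$ to the Hessian of $\ell$ in normal coordinates, I will briefly check that this sign convention is consistent with it. A quick check is that in normal coordinates $\Gamma(\zero)=0$, so the formula reduces to $\Ric_{ij} = \partial_a\Gamma^a_{ij} - \partial_i\partial_j\ell$, which matches the classical expansion of the metric.

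\textbf{Step 3: collecting terms.} Now I add $\nabla^2\log\rho$ to $-\tfrac13\Ric$ and sort the terms.

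The second-derivative terms combine as
\begin{equation*}
\partial_i\partial_j\big(\log p_x - \ell\big) + \tfrac13\,\partial_i\partial_j \ell = \partial_i\partial_j\big(\log p_x - \tfrac23 \ell\big) = \partial_i\partial_j r ,
\end{equation*}
since $\tfrac23\ell = \tfrac13\log\det g_x$.

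The first-order terms combine as
\begin{equation*}
-\Gamma^k_{ij}\,\partial_k(\log p_x - \ell) - \tfrac13\,\Gamma^k_{ij}\,\partial_k \ell = -\Gamma^k_{ij}\,\partial_k r .
\end{equation*}

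The remaining two terms of $-\tfrac13\Ric$ are $-\tfrac13\,\partial_a\Gamma^a_{ij}$ and $+\tfrac13\,\Gamma^a_{ib}\Gamma^b_{aj}$. Together these give exactly \Cref{eq:disentanglement-tensor-data-coordinates}.

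\textbf{Step 4: transformation to normal coordinates.} $\mathcal{D}$ is a $(0,2)$-tensor, so \Cref{eq:metric-transformation-rule} applies with $\phi = h_{\x_0,J_x}^{-1}$. Its Jacobian at $\s=\zero$ is $J_x$, as stated after \Cref{eq:normal-coordinate-chart}, and this gives $\mathcal{D}_s(\zero) = J_x^\top \mathcal{D}_x(\x_0) J_x$.

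The main obstacle is bookkeeping rather than any conceptual step. The index conventions and sign of the Ricci formula must be chosen to agree with the paper's operational definition. The contracted Christoffel identity is exactly what makes the $\tfrac12\log\det g_x$ pieces merge into the single function $r$ with coefficient $\tfrac13$, so I will verify those coefficients carefully.
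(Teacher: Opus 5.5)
Your proposal is correct and follows essentially the same route as the paper. Both expand $\nabla^2\log\rho$ and $\Ric$ in data coordinates, use the contracted identity $\Gamma^a_{ai} = \tfrac12\partial_i\log\det g_x$ to turn the trace terms of $\Ric$ into derivatives of $\tfrac12\log\det g_x$, collect terms into $r$, and finish with the $(0,2)$ transformation rule using the decoder Jacobian $J_x$. The only difference is bookkeeping: the paper packages those trace terms as $-\tfrac12\nabla^2\log\det g_x$ and combines covariant Hessians, whereas you collect the second- and first-order terms separately.
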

See \Cref{app:prop:disentanglement-tensor-coordinates} for the derivation.
This coordinate formula makes $\mathcal{D}$ computable, but also shows why exact computation can be expensive.
In closed form, it requires second order derivatives of the metric and the log likelihood, which are not always tractable in practice.
We discuss these practical limitations in \Cref{sec:practical-limitations}.
When $\mathcal{D}_x(\x_0)$ is available, pointwise disentanglement reduces to choosing $J_x$ so that \Cref{eq:disentanglement-tensor-normal-coordinates} yields a diagonal matrix.
This gives our algorithm for performing pointwise disentanglement.

\begin{restatable}[RICA]{proposition}{PointwiseDisentanglementRICA}
  \label{thm:rica}
  Let $\mathcal{D}_x(\x_0) \in \mathbb{R}^{n \times n}$ be the coordinate matrix of the disentanglement tensor at the base point $\x_0$.
  Since $\mathcal{D}_x(\x_0)$ is symmetric and $g_x(\x_0)$ is positive definite, there exists a $g$-orthogonal matrix $J^*_x$ solving
  \begin{equation}
    {J^*_x}^\top \mathcal{D}_x(\x_0)\, J^*_x = \Lambda,
    \quad
    {J^*_x}^\top g_x(\x_0)\, J^*_x = I_n
  \end{equation}
  where $\Lambda$ is diagonal.
  Then $\s = h_{\scriptstyle \x_0, J^*_x}(\x)$ is pointwise disentangled at the base point, and is identifiable up to sign flips and permutations if the entries of $\Lambda$ are distinct.
\end{restatable}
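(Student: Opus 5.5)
The plan has three parts: construct $J^*_x$ from the generalized symmetric eigenvalue problem, apply \Cref{thm:disentanglement-tensor} and \Cref{prop:disentanglement-tensor-coordinates} to get pointwise disentanglement, and then characterize all other solutions to obtain identifiability.

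\textbf{Existence.} Let $G = g_x(\x_0)$, which is symmetric positive definite, and let $D = \mathcal{D}_x(\x_0)$. The matrix $D$ is symmetric because it is the coordinate matrix of a covariant Hessian minus a Ricci tensor, and both are symmetric $(0,2)$-tensors under the Levi-Civita connection. Write $G = L L^\top$ for the Cholesky factorization. The matrix $L^{-1} D L^{-\top}$ is real symmetric, so the spectral theorem gives an orthogonal $Q$ with $Q^\top L^{-1} D L^{-\top} Q = \Lambda$ diagonal. Set $J^*_x = L^{-\top} Q$. Then
\begin{equation*}
{J^*_x}^\top G J^*_x = Q^\top Q = I_n,
\qquad
{J^*_x}^\top D J^*_x = \Lambda .
\end{equation*}
So $J^*_x$ is $g$-orthogonal and it diagonalizes $D$.

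\textbf{Pointwise disentanglement.} Since $J^*_x$ is $g$-orthogonal, the normal coordinate chart $h_{\x_0,J^*_x}$ is well defined. \Cref{thm:disentanglement-tensor} gives $\partial^2 \log p_s(\zero)/\partial\s\partial\s^\top = \mathcal{D}_s(\zero)$. \Cref{eq:disentanglement-tensor-normal-coordinates} gives $\mathcal{D}_s(\zero) = {J^*_x}^\top D J^*_x = \Lambda$. This Hessian is diagonal, which is exactly \Cref{def:local-independence}.

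\textbf{Identifiability.} This step carries the content. Suppose $J_x$ is any other $g$-orthogonal matrix with $J_x^\top D J_x$ diagonal. Define $R = {J^*_x}^{-1} J_x$. From the two $g$-orthogonality conditions, $R^\top R = J_x^\top G J_x = I_n$, so $R$ is orthogonal. Then
\begin{equation*}
J_x^\top D J_x = R^\top \Lambda R = \Lambda',
\end{equation*}
with $\Lambda'$ diagonal. Thus $\Lambda R^\top = R^\top \Lambda'$, so each column of $R^\top$ is an eigenvector of $\Lambda$. Because $R^\top$ is orthogonal, $\Lambda'$ has the same multiset of diagonal entries as $\Lambda$. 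When the entries of $\Lambda$ are distinct, every eigenspace of $\Lambda$ is one-dimensional and spanned by a standard basis vector. Hence each column of $R^\top$ is $\pm\e_i$, and $R$ is a signed permutation matrix.

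Finally, transfer this to the charts. From \Cref{eq:normal-coordinate-chart}, $h_{\x_0,J_x}(\x) = J_x^{-1}\log_{\x_0}(\x) = R^{-1} h_{\x_0,J^*_x}(\x)$. The log map depends only on $g$ and $\x_0$, not on $J_x$, so this identity holds on the whole normal neighborhood. The two sets of latents therefore differ only by a sign flip and a permutation.

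The main obstacle I anticipate is justifying that $D$ is symmetric, so that the spectral theorem applies. I would argue this directly from \Cref{eq:disentanglement-tensor-data-coordinates}. The Christoffel symbols $\Gamma^k_{ij}$ of the Levi-Civita connection are symmetric in $i,j$. The remaining term $\sum_{a,b}\Gamma^a_{ib}\Gamma^b_{aj}$ is symmetric under $i\leftrightarrow j$ after relabeling $a\leftrightarrow b$ and using $\Gamma^a_{ib}=\Gamma^a_{bi}$.
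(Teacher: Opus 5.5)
Your proposal is correct and follows the same route as the paper: take $J^*_x$ from the generalized eigenproblem, combine \Cref{thm:disentanglement-tensor} with the $(0,2)$ transformation rule to get $\mathcal{D}_s(\zero)=\Lambda$, and show that any other solution is $J^*_x$ times a signed permutation; the only difference is that you prove existence (Cholesky reduction) and uniqueness (the orthogonal matrix $R={J^*_x}^{-1}J_x$) directly, where the paper cites Golub--Van Loan. One small index slip: $R^\top\Lambda R=\Lambda'$ gives $\Lambda R=R\Lambda'$, so it is the columns of $R$ (not of $R^\top$) that are eigenvectors of $\Lambda$, but the conclusion that $R$ is a signed permutation is unaffected.
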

See \Cref{app:thm:rica} for the proof.
\Cref{thm:rica} gives an algorithm for performing pointwise disentanglement that is equivalent to solving a generalized eigenvalue problem, which differs from standard eigenvalue problems only in how orthogonality is defined \cite{golub2013matrix}.
As a consequence, the uniqueness of the recovered latents depends on the uniqueness of the eigendecomposition of $\mathcal{D}_x(\x_0)$.

We can visualize the algorithm for pointwise disentanglement in \Cref{fig:log-map-overview}.
The blue dots represent a point cloud from which we construct a diffusion-geometry metric \cite{jones2026computing}.
\Cref{fig:log-map-overview-chart} illustrates how this metric disentangles the kernel density estimate of the point cloud using RICA.
Pointwise disentanglement is performed by computing the disentanglement tensor at the base point using \Cref{eq:disentanglement-tensor-data-coordinates}, finding the $g$-rotation $J_x^*$ that diagonalizes it via \Cref{thm:rica}, and then computing the latents as $\s = {J^*_x}^{-1} \log_{\x_0}(\x)$.
The plots in \Cref{fig:log-map-overview-trace} illustrate how changing the density affects the disentanglement tensor.
The left panel of \Cref{fig:log-map-overview-trace} uses the uniform Hausdorff density and the right panel of \Cref{fig:log-map-overview-trace} uses an isotropic Gaussian.
The magnitude of the disentanglement tensor increases significantly when we switch from the uniform Hausdorff density to the isotropic Gaussian.

\subsection{Practical limitations}
\label{sec:practical-limitations}
RICA provides a theoretically justified algorithm for performing pointwise disentanglement with metrics induced by encoders such as those based on self-supervised learning \cite{radford2021learning,caron2021emerging,oquab2024dinov,simeoni2025dinov3}.
However, its computational cost is prohibitive for typical pretrained models encountered in practice because, under the pullback metric, computing \Cref{eq:disentanglement-tensor-data-coordinates} requires third-order automatic differentiation through the encoder.
Additionally, RICA requires the ability to compute the log likelihood of the data density in closed form, which is not always available.
We leave the development of a scalable empirical algorithm for future work.

\section{Experiments}
\label{sec:experiments}

We validate our theory in a source recovery setting where we generate data from ground-truth sources and test the ability of RICA and the baselines to recover the sources.

\subsection{Source recovery}
\label{sec:source-recovery}

We evaluate whether RICA can recover known sources better than ICA and a nonlinear ICA baseline.
This experiment is designed as a controlled validation of the theory.
Complete experimental details are given in \Cref{app:source-recovery-experimental-details}.
We evaluate four tractable geometries: the sphere, torus, hyperbolic space, and SPD matrices.
To test the role that the coordinates used to represent the observations plays, we generate observations for each geometry in both an intrinsic coordinate chart and an alternate coordinate chart, which are specified in \Cref{app:source-recovery-experimental-details:manifolds-and-charts}.
For each geometry, we fix a base point $\x_0$ and randomly draw a $g$-orthonormal matrix $J_x$.
We then generate $N$ paired samples $(\x^{(\ell)}, \s^{(\ell)})_{\ell=1}^N$ from the following generative model:
\begin{equation}
\label{eq:tangent-source-gen-model}
\begin{aligned}
  \x &= \exp_{\x_0}(J_x \s), \quad \text{where} \quad s_i \sim \operatorname{Logistic}(0, b_i), \quad b_i = b\, r_s^{\,i-1}, \quad \text{for } i = 1, \ldots, n
\end{aligned}
\end{equation}
where $\x_0 \in \R^n$ is a fixed base point, $J_x$ is $g$-orthonormal at $\x_0$, and the scalars $b, r_s > 0$ control the overall spread of $\s$ and the anisotropy of the sources, respectively.
For these source-recovery experiments, the problem scale is small enough that the disentanglement tensor can be computed exactly using \Cref{eq:disentanglement-tensor-data-coordinates}.

Each method we evaluate recovers sources $\hat{\s}^{(\ell)} \in \R^n$ from observations $\x^{(\ell)}$ from \Cref{eq:tangent-source-gen-model}.
We evaluate these sources using the mean correlation coefficient (MCC) \cite{khemakhem2020variational} and total correlation (TC) \cite{5392532}.
\begin{equation}
  \label{eq:mcc-multi-info}
  \mathrm{MCC}\bigl(\s, \hat{\s}\bigr) \;=\; \frac{1}{n}\, \max_{\pi \in S_n} \sum_{i=1}^n \bigl|\mathrm{corr}\bigl(\hat s_i,\, s_{\pi(i)}\bigr)\bigr|,\qquad \mathrm{TC}(\hat\s) \;=\;\mathrm{KL}\Bigl[p_{\hat{s}}(\hat\s) \,\Big\|\, \textstyle\prod_{i=1}^n p_{\hat s,i}(\hat s_i)\Bigr]
\end{equation}
where $S_n$ is the set of permutations of $\{1, \ldots, n\}$, the maximum is computed by linear assignment on the absolute-correlation matrix, and $\mathrm{corr}$ is the sample Pearson correlation across the $N$ paired observations.
MCC equals $1$ when the recovered sources match the ground truth up to signed permutation.
TC measures statistical dependence between recovered latents and vanishes when the latents have independent components.
We estimate $\mathrm{TC}$ with the nearest-neighbor estimator of \cite{kraskov2004estimating} at $k = 3$ neighbors.

The methods we use to estimate the sources from observations are RICA \Cref{thm:rica}, FastICA \cite{fastica}, and conformal nonlinear ICA \cite{buchholz2022function} (NL-ICA).
For RICA, we diagonalize the disentanglement tensor in closed form to obtain an orientation $J_x^*$ at the base point, and then recover sources from the rotated normal coordinate chart, $\hat\s = (J_x^*)^{-1} \log_{\x_0}(\x)$.
For FastICA and conformal nonlinear ICA, we recover sources from the latent spaces of their models.
Because MCC and TC are invariant to shifts and scale changes, we do not offset the baseline latents to place the base point at the origin.

We use a conformal model for the nonlinear ICA baseline because it has identifiability guarantees \cite{buchholz2022function}.
This model has the property that the Jacobian matrix of the decoder factors into the product of a scalar function and an orthogonal matrix.
In dimensions greater than two, Liouville's theorem restricts conformal maps to \emph{M\"obius transformations} \cite{buchholz2022function}.
We learn the parameters of a composition of $8$ M\"obius transformations by maximum likelihood under a logistic prior whose scales are matched to the source scales of the generative model.
See \Cref{app:source-recovery-experimental-details} for details.

\subsubsection{Results}
\label{sec:source-recovery-results}

\begin{wrapfigure}[20]{r}{0.44\linewidth}
  \centering
  \includegraphics[width=\linewidth]{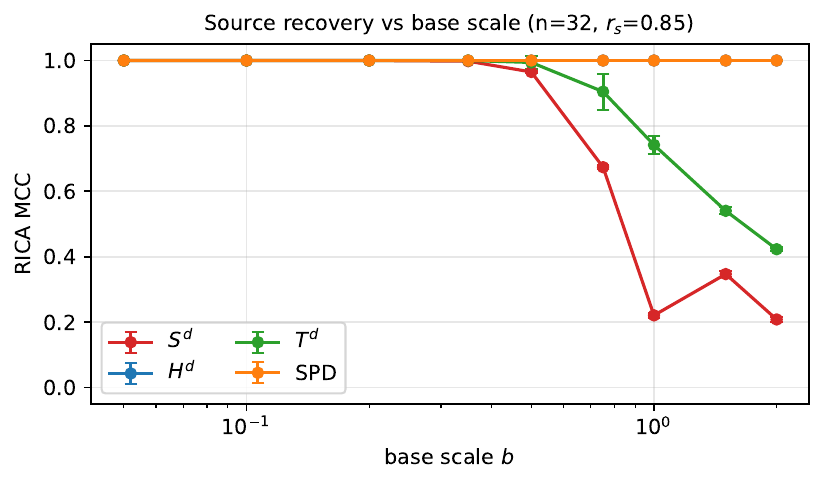}
  \caption{RICA MCC versus base scale $b$ across four geometries at target intrinsic dimension $n \approx 32$.
    Sphere, hyperbolic space, and torus use $n=32$, while SPD uses $8 \times 8$ positive-definite matrices, which have intrinsic dimension $n=8(8+1)/2=36$.
    RICA cannot recover the sources when samples are generated outside the injectivity radius of the exponential map.}
  \label{fig:source-recovery-b-sweep}
\end{wrapfigure}
\Cref{tab:source-recovery} reports MCC and TC for the sources recovered by RICA, ICA, and conformal nonlinear ICA across the different geometries and coordinate charts.
RICA recovers the sources faithfully across all geometries, while the performance of the ICA methods depends on the coordinate chart used to represent the observations.
This coordinate invariance is expected by construction.

Overall, FastICA outperforms NL-ICA in nearly all cases.
FastICA performs well because the samples stay near the base point, where the map from sources to observations is close to linear.
NL-ICA performs poorly because its maximum-likelihood objective is ill-conditioned.
The source scales $b_i$ decay geometrically, so they span many orders of magnitude.
Training then recovers the large-scale sources but loses the small-scale ones, and the number it loses grows with the dimension.
The appendix repeats the experiment at $n \approx 64$ (\Cref{tab:source-recovery-n64}), where the NL-ICA MCC falls to roughly $0.5$ on the intrinsic chart of every geometry.

\begin{table}[t]
\centering
\resizebox{\textwidth}{!}{%
\begin{tabular}{l l r r @{\hspace{1.5em}} r r @{\hspace{1.5em}} r r}
\toprule
 & & \multicolumn{2}{c}{RICA} & \multicolumn{2}{c}{FastICA} & \multicolumn{2}{c}{NL-ICA} \\
\cmidrule(lr){3-4} \cmidrule(lr){5-6} \cmidrule(lr){7-8}
Manifold & Chart & MCC $\uparrow$ & TC $\downarrow$ & MCC $\uparrow$ & TC $\downarrow$ & MCC $\uparrow$ & TC $\downarrow$ \\
\midrule
Sphere $S^d$ & ambient & \textbf{1.00 $\pm$ 0.00} & \textbf{0.01 $\pm$ 0.04} & 0.87 $\pm$ 0.02 & 0.17 $\pm$ 0.05 & 0.88 $\pm$ 0.01 & 0.95 $\pm$ 0.17 \\
 & stereographic & \textbf{1.00 $\pm$ 0.00} & \textbf{0.01 $\pm$ 0.04} & 0.95 $\pm$ 0.02 & 0.41 $\pm$ 0.04 & 0.81 $\pm$ 0.10 & 1.85 $\pm$ 0.44 \\
\midrule
Hyperbolic $H^d$ & Lorentz & \textbf{1.00 $\pm$ 0.00} & \textbf{0.01 $\pm$ 0.04} & 0.94 $\pm$ 0.00 & 0.71 $\pm$ 0.04 & 0.84 $\pm$ 0.01 & 1.80 $\pm$ 0.16 \\
 & Poincar\'e & \textbf{1.00 $\pm$ 0.00} & \textbf{0.01 $\pm$ 0.04} & 0.98 $\pm$ 0.01 & -0.18 $\pm$ 0.03 & 0.90 $\pm$ 0.02 & 0.86 $\pm$ 0.17 \\
\midrule
Flat torus $T^d$ & angle & \textbf{1.00 $\pm$ 0.00} & \textbf{0.01 $\pm$ 0.04} & 0.99 $\pm$ 0.00 & -0.02 $\pm$ 0.04 & 0.89 $\pm$ 0.02 & 1.19 $\pm$ 0.19 \\
 & sincos & \textbf{1.00 $\pm$ 0.00} & \textbf{0.01 $\pm$ 0.04} & 0.54 $\pm$ 0.02 & 3.09 $\pm$ 0.13 & 0.35 $\pm$ 0.01 & 7.46 $\pm$ 0.40 \\
\midrule
SPD($p$) & log-Euclidean & \textbf{1.00 $\pm$ 0.00} & \textbf{-0.01 $\pm$ 0.04} & 0.99 $\pm$ 0.00 & -0.04 $\pm$ 0.04 & 0.85 $\pm$ 0.02 & 1.88 $\pm$ 0.26 \\
 & vech & \textbf{1.00 $\pm$ 0.00} & \textbf{-0.01 $\pm$ 0.04} & 0.38 $\pm$ 0.02 & 4.09 $\pm$ 0.07 & 0.51 $\pm$ 0.01 & 5.93 $\pm$ 0.11 \\
\bottomrule
\end{tabular}
}
\vspace{4pt}
\caption{Source recovery at target intrinsic dimension $n \approx 32$.
  Sphere, hyperbolic space, and torus use $n=32$, while SPD uses $8 \times 8$ positive-definite matrices, which have intrinsic dimension $n=8(8+1)/2=36$.
  We use $b=0.3$ and $r_s=0.85$, and average over 5 seeds and 5 random $g$-orthonormal matrices at the base point.
  RICA recovers the ground truth sources across every chart, while the baselines are sensitive to the chart.}
\label{tab:source-recovery}
\end{table}

\begin{figure}[t]
\centering
\includegraphics[width=\textwidth]{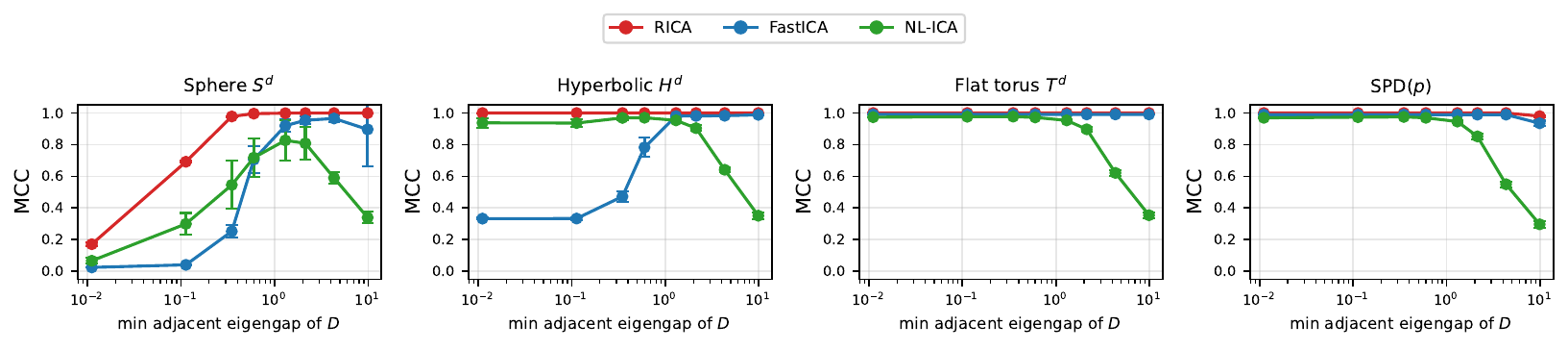}
\caption{MCC versus the minimum adjacent eigengap of the disentanglement tensor.
  The sweep uses the same target dimensions as \Cref{tab:source-recovery}.
  The gap is swept by varying $r_s \in \{0.999, 0.99, 0.97, 0.95, 0.9, 0.85, 0.75, 0.6\}$ at fixed $b=0.3$.
  RICA and FastICA improve as the eigenvalues become more separated, while NL-ICA degrades at the largest eigengaps, where the geometric source scales are most spread out.}
\label{fig:source-recovery-eigvalgap}
\end{figure}

In \Cref{fig:source-recovery-eigvalgap}, we show how the methods behave as the eigenvalues of the disentanglement tensor get closer together.
FastICA degrades quickly as the eigengap closes, and RICA remains accurate down to a small gap before its second-order identifiability fails.
NL-ICA degrades at the largest eigengaps in every panel.
Those eigengaps are produced by the smallest $r_s$, which spreads the source scales, and the maximum-likelihood training then fails on the small-scale sources.

\Cref{fig:source-recovery-b-sweep} shows a failure mode of RICA when the samples are generated outside the injectivity radius of the exponential map.
As the base scale $b$ increases, the samples move farther from the base point, where the exponential map may no longer be invertible.
Here, source recovery with RICA fails because the logarithmic map cannot invert the exponential map.

\section{Related work}
\label{sec:related}

There is no single definition of disentanglement, and the concept appears in many different contexts \cite{wang2024disentangled}.
A large portion of the literature studies generative models with structured latent variables.
This includes ICA and nonlinear ICA (\Cref{sec:ica}), GAN-based methods \cite{goodfellow2014generative,chen2016infogan}, and VAE-based methods \cite{kingma2014autoencoding,higgins_beta-vae_2017,burgess2018understanding,kim2018disentangling,chen_isolating_2018,kumar2018variational,reizinger2022embrace}.
These approaches typically define factors as statistically independent latent variables that generate the data.
Other approaches define disentanglement without starting from statistical independence of latent coordinates.
Group-theoretic approaches characterize disentangled representations by how transformations act independently on latent subspaces \cite{higgins2018towards,caselles2019symmetry}.
Causal approaches instead tie factors to mechanisms and interventions \cite{suter2019robustly,yang_causalvae_2021,scholkopf2021toward}.
RICA is closest to ICA and nonlinear ICA because it defines disentanglement through an invertible map from data to latent variables.
However, RICA finds local factors of variation from a metric and a density rather than from a global latent-variable model.

Recently, interpretability research in large language models has studied features that exhibit disentangled behavior \cite{o2024disentangling,elhage2022toy,templeton2024scaling}.
In particular, sparse autoencoders have been used to decompose neural representations into interpretable features \cite{cunningham2023sparseautoencodershighlyinterpretable,sharkey2025open}.
However, these features are not always isolated factors of variation \cite{mueller2025isolation}.
Although RICA is also motivated by interpretability, it does not offer a global interpretation of features in the way these methods do and instead recovers local factors of variation.
An interesting future research direction could be to construct global interpretations that locally agree with RICA.

Geometric tools are widely used in machine learning \cite{bronstein2021geometric,gruffaz2025riemannian,dombrowski2021diffeomorphic,williams2025geodesic}.
Some methods study learning problems on manifolds other than $\mathbb{R}^n$ \cite{gemici2016normalizing,chen2024flow}, whereas our experiments focus on Euclidean data spaces.
This focus does not restrict the applicability of RICA because normal coordinates exist on every smooth Riemannian manifold \cite{lee2018introduction}, and even in Euclidean space, the choice of metric can induce nontrivial geometric structure.
Other methods learn structure from data, distributions, or generative models \cite{gruffaz2025riemannian,bamberger2026riemannian,hartmann2022lagrangian,williams2025geodesic,arvanitidis2018latent,jones2026computing}.
RICA takes the geometry as input and can therefore be applied with the metrics learned by these methods.

The geometric methods most closely related to RICA are generalizations of PCA to manifolds.
Tangent PCA \cite{pennec2006intrinsic}, principal geodesic analysis \cite{fletcher2004principal,ppga,zhang2019mixture}, and barycentric subspace analysis \cite{pennec2018barycentric} extend PCA to manifold-valued data for dimensionality reduction.
Like RICA, these methods recover directions or subspaces that summarize local variation in the data.
However, they optimize variance or reconstruction criteria \cite{pennec2006intrinsic,fletcher2011geodesic}, whereas RICA uses an eigendecomposition of the second derivatives of the log density in normal coordinates to find local factors of variation.

\section{Conclusion}
\label{sec:conclusion}

We introduced Riemannian ICA as a local geometric extension of ICA that replaces the global invertible generative model with a Riemannian metric and a data density.
This allows us to interpret normal coordinates around a base point as adaptive latent variables and factors of variation as geodesics.
Our central theoretical contribution is the disentanglement tensor, $\mathcal{D} = \nabla^2 \log \rho - \tfrac{1}{3}\Ric$, whose eigenspaces identify pointwise disentangled directions.
This gives an algorithm for pointwise disentanglement by solving a generalized eigenvalue problem.
In a controlled source recovery setting where the metric, likelihood, and ground truth sources are known, RICA recovers the tangent space sources across several tractable manifolds and coordinate charts.
An open challenge is to scale RICA to modern pretrained encoders and determine whether the resulting local factors are useful for interpreting model representations.
If this challenge can be met, RICA may provide a route toward theoretically grounded interpretability of pretrained representations.

\begin{ack}
  The author thanks Arie Stern Gonzalez, Carlos Sotos, Daniel Sheldon, John Raisbeck, Madalina Fiterau, Miguel Fuentes, and Will Schwarzer for helpful discussions and feedback.
\end{ack}

\bibliographystyle{plainnat}
\bibliography{references}

\begin{thebibliography}{89}
\providecommand{\natexlab}[1]{#1}
\providecommand{\url}[1]{\texttt{#1}}
\expandafter\ifx\csname urlstyle\endcsname\relax
  \providecommand{\doi}[1]{doi: #1}\else
  \providecommand{\doi}{doi: \begingroup \urlstyle{rm}\Url}\fi

\bibitem[Arvanitidis et~al.(2018)Arvanitidis, Hansen, and
  Hauberg]{arvanitidis2018latent}
Georgios Arvanitidis, Lars~Kai Hansen, and S{\o}ren Hauberg.
\newblock Latent space oddity: on the curvature of deep generative models.
\newblock In \emph{International Conference on Learning Representations}, 2018.
\newblock URL \url{https://openreview.net/forum?id=SJzRZ-WCZ}.

\bibitem[Assran et~al.(2023)Assran, Duval, Misra, Bojanowski, Vincent, Rabbat,
  LeCun, and Ballas]{assran2023self}
Mahmoud Assran, Quentin Duval, Ishan Misra, Piotr Bojanowski, Pascal Vincent,
  Michael Rabbat, Yann LeCun, and Nicolas Ballas.
\newblock Self-supervised learning from images with a joint-embedding
  predictive architecture.
\newblock In \emph{Proceedings of the IEEE/CVF conference on computer vision
  and pattern recognition}, pages 15619--15629, 2023.

\bibitem[Balestriero et~al.(2025)Balestriero, Ballas, Rabbat, and
  LeCun]{balestriero2025gaussian}
Randall Balestriero, Nicolas Ballas, Mike Rabbat, and Yann LeCun.
\newblock Gaussian embeddings: How jepas secretly learn your data density.
\newblock \emph{arXiv preprint arXiv:2510.05949}, 2025.

\bibitem[Bamberger et~al.(2026)Bamberger, Gosztolai, Vandergheynst, Bronstein,
  and Jones]{bamberger2026riemannian}
Jacob Bamberger, Adam Gosztolai, Pierre Vandergheynst, Michael~M. Bronstein,
  and Iolo Jones.
\newblock Riemannian metric matching for scalable geometric modelling of
  distributions.
\newblock In \emph{ICLR 2026 Workshop on Geometry-grounded Representation
  Learning and Generative Modeling}, 2026.
\newblock URL \url{https://openreview.net/forum?id=DB24JOyswh}.

\bibitem[Bengio et~al.(2013)Bengio, Courville, and
  Vincent]{bengio2013representation}
Yoshua Bengio, Aaron Courville, and Pascal Vincent.
\newblock Representation learning: A review and new perspectives.
\newblock \emph{IEEE transactions on pattern analysis and machine
  intelligence}, 35\penalty0 (8):\penalty0 1798--1828, 8 2013.
\newblock ISSN 0162-8828.
\newblock \doi{10.1109/tpami.2013.50}.

\bibitem[Bradbury et~al.(2018)Bradbury, Frostig, Hawkins, Johnson, Katariya,
  Leary, Maclaurin, Necula, Paszke, Vander{P}las, Wanderman-{M}ilne, and
  Zhang]{jax2018github}
James Bradbury, Roy Frostig, Peter Hawkins, Matthew~James Johnson, Yash
  Katariya, Chris Leary, Dougal Maclaurin, George Necula, Adam Paszke, Jake
  Vander{P}las, Skye Wanderman-{M}ilne, and Qiao Zhang.
\newblock {JAX}: composable transformations of {P}ython+{N}um{P}y programs,
  2018.
\newblock URL \url{http://github.com/jax-ml/jax}.

\bibitem[Brady et~al.(2024)Brady, von Kügelgen, Lachapelle, Buchholz, Kipf,
  and Brendel]{Brady2024interaction}
Jack Brady, Julius von Kügelgen, Sébastien Lachapelle, Simon Buchholz, Thomas
  Kipf, and Wieland Brendel.
\newblock Interaction asymmetry: A general principle for learning composable
  abstractions.
\newblock \emph{arXiv}, 2024.
\newblock URL \url{https://arxiv.org/abs/2411.07784}.

\bibitem[Bronstein et~al.(2021)Bronstein, Bruna, Cohen, and
  Veli{\v{c}}kovi{\'c}]{bronstein2021geometric}
Michael~M Bronstein, Joan Bruna, Taco Cohen, and Petar Veli{\v{c}}kovi{\'c}.
\newblock Geometric deep learning: Grids, groups, graphs, geodesics, and
  gauges.
\newblock \emph{arXiv preprint arXiv:2104.13478}, 2021.

\bibitem[Buchholz et~al.(2022)Buchholz, Besserve, and
  Sch{\"o}lkopf]{buchholz2022function}
Simon Buchholz, Michel Besserve, and Bernhard Sch{\"o}lkopf.
\newblock Function classes for identifiable nonlinear independent component
  analysis.
\newblock \emph{Advances in Neural Information Processing Systems},
  35:\penalty0 16946--16961, 8 2022.
\newblock \doi{10.48550/arxiv.2208.06406}.

\bibitem[Buchholz et~al.(2023)Buchholz, Rajendran, Rosenfeld, Aragam,
  Sch{\"o}lkopf, and Ravikumar]{buchholz2023learning}
Simon Buchholz, Goutham Rajendran, Elan Rosenfeld, Bryon Aragam, Bernhard
  Sch{\"o}lkopf, and Pradeep Ravikumar.
\newblock Learning linear causal representations from interventions under
  general nonlinear mixing.
\newblock \emph{Advances in Neural Information Processing Systems},
  36:\penalty0 45419--45462, 2023.

\bibitem[Burgess et~al.(2017)Burgess, Higgins, Pal, Matthey, Watters,
  Desjardins, and Lerchner]{burgess2018understanding}
Christopher~P. Burgess, Irina Higgins, Arka Pal, Lo{\"i}c Matthey, Nick
  Watters, Guillaume Desjardins, and Alexander Lerchner.
\newblock Understanding disentangling in {$\beta$}-{VAE}.
\newblock In \emph{NIPS 2017 Workshop on Learning Disentangled
  Representations}, 2017.

\bibitem[Caron et~al.(2021)Caron, Touvron, Misra, J\'egou, Mairal, Bojanowski,
  and Joulin]{caron2021emerging}
Mathilde Caron, Hugo Touvron, Ishan Misra, Herv\'e J\'egou, Julien Mairal,
  Piotr Bojanowski, and Armand Joulin.
\newblock Emerging properties in self-supervised vision transformers.
\newblock In \emph{Proceedings of the International Conference on Computer
  Vision (ICCV)}, 2021.

\bibitem[Caselles-Dupr{\'e} et~al.(2019)Caselles-Dupr{\'e}, Garcia~Ortiz, and
  Filliat]{caselles2019symmetry}
Hugo Caselles-Dupr{\'e}, Michael Garcia~Ortiz, and David Filliat.
\newblock Symmetry-based disentangled representation learning requires
  interaction with environments.
\newblock \emph{Advances in Neural Information Processing Systems}, 32, 2019.

\bibitem[Chen et~al.(2018)Chen, Li, Grosse, and Duvenaud]{chen_isolating_2018}
Ricky T.~Q. Chen, Xuechen Li, Roger~B Grosse, and David~K Duvenaud.
\newblock Isolating {Sources} of {Disentanglement} in {Variational}
  {Autoencoders}.
\newblock In S.~Bengio, H.~Wallach, H.~Larochelle, K.~Grauman, N.~Cesa-Bianchi,
  and R.~Garnett, editors, \emph{Advances in {Neural} {Information}
  {Processing} {Systems}}, volume~31. Curran Associates, Inc., 2 2018.
\newblock URL
  \url{https://proceedings.neurips.cc/paper/2018/file/1ee3dfcd8a0645a25a35977997223d22-Paper.pdf}.

\bibitem[Chen and Lipman(2024)]{chen2024flow}
Ricky~TQ Chen and Yaron Lipman.
\newblock Flow matching on general geometries.
\newblock In \emph{International Conference on Learning Representations},
  volume 2024, pages 47922--47945, 2024.

\bibitem[Chen et~al.(2016)Chen, Duan, Houthooft, Schulman, Sutskever, and
  Abbeel]{chen2016infogan}
Xi~Chen, Yan Duan, Rein Houthooft, John Schulman, Ilya Sutskever, and Pieter
  Abbeel.
\newblock Infogan: Interpretable representation learning by information
  maximizing generative adversarial nets.
\newblock \emph{Advances in neural information processing systems}, 29, 2016.

\bibitem[Comon(1994)]{comon1994independent}
Pierre Comon.
\newblock Independent component analysis, a new concept?
\newblock \emph{Signal Processing}, 36\penalty0 (3):\penalty0 287--314, 1994.
\newblock \doi{10.1016/0165-1684(94)90029-9}.

\bibitem[Cornish et~al.(2020)Cornish, Caterini, Deligiannidis, and
  Doucet]{cornish2020relaxing}
Rob Cornish, Anthony Caterini, George Deligiannidis, and Arnaud Doucet.
\newblock Relaxing bijectivity constraints with continuously indexed
  normalising flows.
\newblock In \emph{International conference on machine learning}, pages
  2133--2143. PMLR, 2020.

\bibitem[Cunningham(2026)]{cunningham2026localcoordinates}
Edmond Cunningham.
\newblock {local\_coordinates}: A {JAX}-based framework for differential
  geometry computations on {R}iemannian manifolds, 2026.
\newblock URL \url{https://github.com/EddieCunningham/local_coordinates}.

\bibitem[DeepMind et~al.(2020)DeepMind, Babuschkin, Baumli, Bell, Bhupatiraju,
  Bruce, Buchlovsky, Budden, Cai, Clark, Danihelka, Dedieu, Fantacci, Godwin,
  Jones, Hemsley, Hennigan, Hessel, Hou, Kapturowski, Keck, Kemaev, King,
  Kunesch, Martens, Merzic, Mikulik, Norman, Papamakarios, Quan, Ring, Ruiz,
  Sanchez, Sartran, Schneider, Sezener, Spencer, Srinivasan, Stanojevi{\'c},
  Stokowiec, Wang, Zhou, and Viola]{deepmind2020jax}
DeepMind, Igor Babuschkin, Kate Baumli, Alison Bell, Surya Bhupatiraju, Jake
  Bruce, Peter Buchlovsky, David Budden, Trevor Cai, Aidan Clark, Ivo
  Danihelka, Antoine Dedieu, Claudio Fantacci, Jonathan Godwin, Chris Jones,
  Ross Hemsley, Tom Hennigan, Matteo Hessel, Shaobo Hou, Steven Kapturowski,
  Thomas Keck, Iurii Kemaev, Michael King, Markus Kunesch, Lena Martens, Hamza
  Merzic, Vladimir Mikulik, Tamara Norman, George Papamakarios, John Quan,
  Roman Ring, Francisco Ruiz, Alvaro Sanchez, Laurent Sartran, Rosalia
  Schneider, Eren Sezener, Stephen Spencer, Srivatsan Srinivasan, Milo{\v{s}}
  Stanojevi{\'c}, Wojciech Stokowiec, Luyu Wang, Guangyao Zhou, and Fabio
  Viola.
\newblock The {D}eep{M}ind {JAX} {E}cosystem, 2020.
\newblock URL \url{http://github.com/google-deepmind/optax}.

\bibitem[Deng et~al.(2009)Deng, Dong, Socher, Li, Li, and Fei-Fei]{imagenet}
Jia Deng, Wei Dong, Richard Socher, Li-Jia Li, Kai Li, and Li~Fei-Fei.
\newblock Imagenet: A large-scale hierarchical image database.
\newblock In \emph{2009 IEEE Conference on Computer Vision and Pattern
  Recognition}, pages 248--255, 2009.
\newblock \doi{10.1109/CVPR.2009.5206848}.

\bibitem[Dinh et~al.(2017)Dinh, Sohl-Dickstein, and Bengio]{dinh2017density}
Laurent Dinh, Jascha Sohl-Dickstein, and Samy Bengio.
\newblock Density estimation using real {NVP}.
\newblock In \emph{International Conference on Learning Representations}, 2017.
\newblock URL \url{https://openreview.net/forum?id=HkpbnH9lx}.

\bibitem[Dombrowski et~al.(2021)Dombrowski, Gerken, and
  Kessel]{dombrowski2021diffeomorphic}
Ann-Kathrin Dombrowski, Jan~E Gerken, and Pan Kessel.
\newblock Diffeomorphic explanations with normalizing flows.
\newblock In \emph{ICML Workshop on Invertible Neural Networks, Normalizing
  Flows, and Explicit Likelihood Models}, 6 2021.
\newblock URL \url{https://openreview.net/forum?id=ZBR9EpEl6G4}.

\bibitem[Elhage et~al.(2022)Elhage, Hume, Olsson, Schiefer, Henighan, Kravec,
  Hatfield-Dodds, Lasenby, Drain, Chen, Grosse, McCandlish, Kaplan, Amodei,
  Wattenberg, and Olah]{elhage2022toy}
Nelson Elhage, Tristan Hume, Catherine Olsson, Nicholas Schiefer, Tom Henighan,
  Shauna Kravec, Zac Hatfield-Dodds, Robert Lasenby, Dawn Drain, Carol Chen,
  Roger Grosse, Sam McCandlish, Jared Kaplan, Dario Amodei, Martin Wattenberg,
  and Christopher Olah.
\newblock Toy models of superposition.
\newblock \url{https://transformer-circuits.pub/2022/toy_model/index.html},
  2022.
\newblock URL \url{https://transformer-circuits.pub/2022/toy_model/index.html}.
\newblock Transformer Circuits Thread.

\bibitem[Fletcher et~al.(2004)Fletcher, Lu, Pizer, and
  Joshi]{fletcher2004principal}
P~Thomas Fletcher, Conglin Lu, Stephen~M Pizer, and Sarang Joshi.
\newblock Principal geodesic analysis for the study of nonlinear statistics of
  shape.
\newblock \emph{IEEE transactions on medical imaging}, 23\penalty0
  (8):\penalty0 995--1005, 2004.

\bibitem[Fletcher(2011)]{fletcher2011geodesic}
Thomas Fletcher.
\newblock Geodesic regression on riemannian manifolds.
\newblock In \emph{Proceedings of the Third International Workshop on
  Mathematical Foundations of Computational Anatomy-Geometrical and Statistical
  Methods for Modelling Biological Shape Variability}, pages 75--86, 2011.

\bibitem[Gemici et~al.(2016)Gemici, Rezende, and
  Mohamed]{gemici2016normalizing}
Mevlana~C Gemici, Danilo Rezende, and Shakir Mohamed.
\newblock Normalizing flows on riemannian manifolds.
\newblock \emph{arXiv preprint arXiv:1611.02304}, 2016.

\bibitem[Golub and Van~Loan(2013)]{golub2013matrix}
Gene~H Golub and Charles~F Van~Loan.
\newblock \emph{Matrix computations}.
\newblock JHU press, 2013.

\bibitem[Goodfellow et~al.(2014)Goodfellow, Pouget-Abadie, Mirza, Xu,
  Warde-Farley, Ozair, Courville, and Bengio]{goodfellow2014generative}
Ian~J Goodfellow, Jean Pouget-Abadie, Mehdi Mirza, Bing Xu, David Warde-Farley,
  Sherjil Ozair, Aaron Courville, and Yoshua Bengio.
\newblock Generative adversarial nets.
\newblock \emph{Advances in neural information processing systems}, 27, 2014.

\bibitem[Gresele et~al.(2021)Gresele, K{\"u}gelgen, Stimper, Sch{\"o}lkopf, and
  Besserve]{gresele2021independent}
Luigi Gresele, Julius~Von K{\"u}gelgen, Vincent Stimper, Bernhard
  Sch{\"o}lkopf, and Michel Besserve.
\newblock Independent mechanism analysis, a new concept?
\newblock In A.~Beygelzimer, Y.~Dauphin, P.~Liang, and J.~Wortman Vaughan,
  editors, \emph{Advances in Neural Information Processing Systems}, pages
  28233--28248, 2021.
\newblock URL \url{https://openreview.net/forum?id=Rnn8zoAkrwr}.

\bibitem[Gruffaz and Sassen(2025)]{gruffaz2025riemannian}
Samuel Gruffaz and Josua Sassen.
\newblock Riemannian metric learning: Closer to you than you imagine.
\newblock \emph{arXiv preprint arXiv:2503.05321}, 2025.

\bibitem[Harris et~al.(2020)Harris, Millman, van~der Walt, Gommers, Virtanen,
  Cournapeau, Wieser, Taylor, Berg, Smith, Kern, Picus, Hoyer, van Kerkwijk,
  Brett, Haldane, del R{\'i}o, Wiebe, Peterson, G{\'e}rard-Marchant, Sheppard,
  Reddy, Weckesser, Abbasi, Gohlke, and Oliphant]{harris2020array}
Charles~R. Harris, K.~Jarrod Millman, St{\'e}fan~J. van~der Walt, Ralf Gommers,
  Pauli Virtanen, David Cournapeau, Eric Wieser, Julian Taylor, Sebastian Berg,
  Nathaniel~J. Smith, Robert Kern, Matti Picus, Stephan Hoyer, Marten~H. van
  Kerkwijk, Matthew Brett, Allan Haldane, Jaime~Fern{\'a}ndez del R{\'i}o, Mark
  Wiebe, Pearu Peterson, Pierre G{\'e}rard-Marchant, Kevin Sheppard, Tyler
  Reddy, Warren Weckesser, Hameer Abbasi, Christoph Gohlke, and Travis~E.
  Oliphant.
\newblock Array programming with {NumPy}.
\newblock \emph{Nature}, 585\penalty0 (7825):\penalty0 357--362, 2020.
\newblock \doi{10.1038/s41586-020-2649-2}.

\bibitem[Hartmann et~al.(2022)Hartmann, Girolami, and
  Klami]{hartmann2022lagrangian}
Marcelo Hartmann, Mark Girolami, and Arto Klami.
\newblock Lagrangian manifold monte carlo on monge patches.
\newblock In \emph{International Conference on Artificial Intelligence and
  Statistics}, pages 4764--4781. PMLR, 2022.

\bibitem[Higgins et~al.(2017)Higgins, Matthey, Pal, Burgess, Glorot, Botvinick,
  Mohamed, and Lerchner]{higgins_beta-vae_2017}
Irina Higgins, Loic Matthey, Arka Pal, Christopher Burgess, Xavier Glorot,
  Matthew Botvinick, Shakir Mohamed, and Alexander Lerchner.
\newblock beta-vae: {Learning} basic visual concepts with a constrained
  variational framework.
\newblock In \emph{International Conference on Learning Representations}, 4
  2017.

\bibitem[Higgins et~al.(2018)Higgins, Amos, Pfau, Racaniere, Matthey, Rezende,
  and Lerchner]{higgins2018towards}
Irina Higgins, David Amos, David Pfau, Sebastien Racaniere, Loic Matthey,
  Danilo Rezende, and Alexander Lerchner.
\newblock Towards a definition of disentangled representations.
\newblock \emph{arXiv preprint arXiv:1812.02230}, abs/1812.02230, 2018.

\bibitem[Horan et~al.(2021)Horan, Richardson, and Weiss]{horan2021unsupervised}
Daniella Horan, Eitan Richardson, and Yair Weiss.
\newblock When is unsupervised disentanglement possible?
\newblock \emph{Advances in Neural Information Processing Systems},
  34:\penalty0 5150--5161, 2021.

\bibitem[Huben et~al.(2024)Huben, Cunningham, Smith, Ewart, and
  Sharkey]{cunningham2023sparseautoencodershighlyinterpretable}
Robert Huben, Hoagy Cunningham, Logan~Riggs Smith, Aidan Ewart, and Lee
  Sharkey.
\newblock Sparse autoencoders find highly interpretable features in language
  models.
\newblock In \emph{The Twelfth International Conference on Learning
  Representations}, 2024.
\newblock URL \url{https://openreview.net/forum?id=F76bwRSLeK}.

\bibitem[Hunter(2007)]{Hunter:2007}
J.~D. Hunter.
\newblock Matplotlib: A 2d graphics environment.
\newblock \emph{Computing in Science \& Engineering}, 9\penalty0 (3):\penalty0
  90--95, 2007.
\newblock \doi{10.1109/MCSE.2007.55}.

\bibitem[Hyv{\"a}rinen and Oja(2000)]{hyvarinen2000ica}
Aapo Hyv{\"a}rinen and Erkki Oja.
\newblock Independent component analysis: Algorithms and applications.
\newblock \emph{Neural Networks}, 13\penalty0 (4--5):\penalty0 411--430, 6
  2000.
\newblock ISSN 0893-6080.
\newblock \doi{10.1016/S0893-6080(00)00026-5}.

\bibitem[Hyv\"{a}rinen and Pajunen(1999)]{nonlinear_ica}
Aapo Hyv\"{a}rinen and Petteri Pajunen.
\newblock Nonlinear independent component analysis: Existence and uniqueness
  results.
\newblock \emph{Neural Netw.}, 12\penalty0 (3):\penalty0 429–439, apr 1999.
\newblock ISSN 0893-6080.
\newblock \doi{10.1016/S0893-6080(98)00140-3}.
\newblock URL \url{https://doi.org/10.1016/S0893-6080(98)00140-3}.

\bibitem[Hyvarinen et~al.(2019)Hyvarinen, Sasaki, and
  Turner]{hyvarinen2019nonlinear}
Aapo Hyvarinen, Hiroaki Sasaki, and Richard Turner.
\newblock Nonlinear ica using auxiliary variables and generalized contrastive
  learning.
\newblock In \emph{The 22nd international conference on artificial intelligence
  and statistics}, pages 859--868. PMLR, 2019.

\bibitem[Hyv{\"a}rinen et~al.(2023)Hyv{\"a}rinen, Khemakhem, and
  Morioka]{hyvarinen2023nonlinear}
Aapo Hyv{\"a}rinen, Ilyes Khemakhem, and Hiroshi Morioka.
\newblock Nonlinear independent component analysis for principled
  disentanglement in unsupervised deep learning.
\newblock \emph{Patterns}, 4\penalty0 (10), 2023.

\bibitem[Hyvärinen and Oja(1997)]{fastica}
Aapo Hyvärinen and Erkki Oja.
\newblock A fast fixed-point algorithm for independent component analysis.
\newblock \emph{Neural Computation}, 9\penalty0 (7):\penalty0 1483--1492, 1997.
\newblock \doi{10.1162/neco.1997.9.7.1483}.

\bibitem[Jones and Lanners(2026)]{jones2026computing}
Iolo Jones and David Lanners.
\newblock Computing diffusion geometry.
\newblock \emph{arXiv preprint arXiv:2602.06006}, 2026.

\bibitem[Karras et~al.(2019)Karras, Laine, and Aila]{karras2019style}
Tero Karras, Samuli Laine, and Timo Aila.
\newblock A style-based generator architecture for generative adversarial
  networks.
\newblock In \emph{Proceedings of the IEEE/CVF Conference on Computer Vision
  and Pattern Recognition}, pages 4401--4410, 6 2019.
\newblock \doi{10.1109/cvpr.2019.00453}.

\bibitem[Khemakhem et~al.(2020)Khemakhem, Kingma, Monti, and
  Hyvarinen]{khemakhem2020variational}
Ilyes Khemakhem, Diederik Kingma, Ricardo Monti, and Aapo Hyvarinen.
\newblock Variational autoencoders and nonlinear ica: A unifying framework.
\newblock In \emph{International conference on artificial intelligence and
  statistics}, pages 2207--2217. PMLR, 2020.

\bibitem[Kidger and Garcia(2021)]{kidger2021equinox}
Patrick Kidger and Cristian Garcia.
\newblock {E}quinox: neural networks in {JAX} via callable {P}y{T}rees and
  filtered transformations.
\newblock \emph{Differentiable Programming workshop at Neural Information
  Processing Systems 2021}, 2021.

\bibitem[Kim and Mnih(2018)]{kim2018disentangling}
Hyunjik Kim and Andriy Mnih.
\newblock Disentangling by factorising.
\newblock In \emph{International conference on machine learning}, pages
  2649--2658. PMLR, 2018.

\bibitem[Kingma and Welling(2014)]{kingma2014autoencoding}
Diederik~P. Kingma and Max Welling.
\newblock Auto-encoding variational bayes.
\newblock In Yoshua Bengio and Yann LeCun, editors, \emph{2nd International
  Conference on Learning Representations, {ICLR} 2014, Banff, AB, Canada, April
  14-16, 2014, Conference Track Proceedings}, 2014.
\newblock URL \url{http://arxiv.org/abs/1312.6114}.

\bibitem[Kraskov et~al.(2004)Kraskov, St{\"o}gbauer, and
  Grassberger]{kraskov2004estimating}
Alexander Kraskov, Harald St{\"o}gbauer, and Peter Grassberger.
\newblock Estimating mutual information.
\newblock \emph{Physical Review E—Statistical, Nonlinear, and Soft Matter
  Physics}, 69\penalty0 (6):\penalty0 066138, 2004.

\bibitem[Krishnan(2021)]{krishnan2021diagonalizing}
Anusha~M Krishnan.
\newblock Diagonalizing the ricci tensor.
\newblock \emph{The Journal of Geometric Analysis}, 31\penalty0 (6):\penalty0
  5638--5658, 2021.

\bibitem[Kumar et~al.(2018)Kumar, Sattigeri, and
  Balakrishnan]{kumar2018variational}
Abhishek Kumar, Prasanna Sattigeri, and Avinash Balakrishnan.
\newblock Variational inference of disentangled latent concepts from unlabeled
  observations.
\newblock In \emph{International Conference on Learning Representations}, 2018.

\bibitem[Lachapelle et~al.(2022)Lachapelle, Rodriguez, Sharma, Everett,
  Le~Priol, Lacoste, and Lacoste-Julien]{lachapelle2022disentanglement}
S{\'e}bastien Lachapelle, Pau Rodriguez, Yash Sharma, Katie~E Everett, R{\'e}mi
  Le~Priol, Alexandre Lacoste, and Simon Lacoste-Julien.
\newblock Disentanglement via mechanism sparsity regularization: A new
  principle for nonlinear ica.
\newblock In \emph{Conference on Causal Learning and Reasoning}, pages
  428--484. PMLR, 2022.

\bibitem[Lee(2000)]{Lee00}
John~M. Lee.
\newblock \emph{Introduction to Smooth Manifolds}.
\newblock Springer, 2000.

\bibitem[Lee(2018)]{lee2018introduction}
John~M. Lee.
\newblock \emph{Introduction to Riemannian Manifolds}, volume 176 of
  \emph{Graduate Texts in Mathematics}.
\newblock Springer, 2nd edition, 2018.
\newblock ISBN 978-3-319-91754-2.
\newblock \doi{10.1007/978-3-319-91755-9}.

\bibitem[Locatello et~al.(2020)Locatello, Bauer, Lucic, R{\"a}tsch, Gelly,
  Sch{\"o}lkopf, and Bachem]{locatello2020sober}
Francesco Locatello, Stefan Bauer, Mario Lucic, Gunnar R{\"a}tsch, Sylvain
  Gelly, Bernhard Sch{\"o}lkopf, and Olivier Bachem.
\newblock A sober look at the unsupervised learning of disentangled
  representations and their evaluation.
\newblock \emph{Journal of Machine Learning Research}, 21\penalty0
  (209):\penalty0 1--62, 2020.

\bibitem[Matthes et~al.(2026)Matthes, Han, and Shen]{matthes2026mechanistic}
Stefan Matthes, Zhiwei Han, and Hao Shen.
\newblock Mechanistic independence: A principle for identifiable disentangled
  representations.
\newblock In \emph{The Fourteenth International Conference on Learning
  Representations}, 2026.
\newblock URL \url{https://openreview.net/forum?id=0VVdai71xb}.

\bibitem[Mikolov et~al.(2013)Mikolov, Yih, and
  Zweig]{mikolov-etal-2013-linguistic}
Tomas Mikolov, Wen-tau Yih, and Geoffrey Zweig.
\newblock Linguistic regularities in continuous space word representations.
\newblock In \emph{Proceedings of the 2013 Conference of the North {A}merican
  Chapter of the Association for Computational Linguistics: Human Language
  Technologies}, pages 746--751, Atlanta, Georgia, June 2013. Association for
  Computational Linguistics.
\newblock URL \url{https://aclanthology.org/N13-1090}.

\bibitem[Moser(1965)]{moser1965volume}
J{\"u}rgen Moser.
\newblock On the volume elements on a manifold.
\newblock \emph{Transactions of the American Mathematical Society},
  120\penalty0 (2):\penalty0 286--294, 1965.

\bibitem[Mueller et~al.(2025)Mueller, Lee, Joshi, Lubana, Sridhar, and
  Reizinger]{mueller2025isolation}
Aaron Mueller, Andrew Lee, Shruti Joshi, Ekdeep~Singh Lubana, Dhanya Sridhar,
  and Patrik Reizinger.
\newblock From isolation to entanglement: When do interpretability methods
  identify and disentangle known concepts?
\newblock \emph{arXiv preprint arXiv:2512.15134}, 2025.

\bibitem[O'Neill et~al.(2024)O'Neill, Ye, Iyer, and Wu]{o2024disentangling}
Charles O'Neill, Christine Ye, Kartheik Iyer, and John~F Wu.
\newblock Disentangling dense embeddings with sparse autoencoders.
\newblock \emph{arXiv preprint arXiv:2408.00657}, 2024.

\bibitem[Oquab et~al.(2024)Oquab, Darcet, Moutakanni, Vo, Szafraniec, Khalidov,
  Fernandez, HAZIZA, Massa, El-Nouby, Assran, Ballas, Galuba, Howes, Huang, Li,
  Misra, Rabbat, Sharma, Synnaeve, Xu, Jegou, Mairal, Labatut, Joulin, and
  Bojanowski]{oquab2024dinov}
Maxime Oquab, Timoth{\'e}e Darcet, Th{\'e}o Moutakanni, Huy~V. Vo, Marc
  Szafraniec, Vasil Khalidov, Pierre Fernandez, Daniel HAZIZA, Francisco Massa,
  Alaaeldin El-Nouby, Mido Assran, Nicolas Ballas, Wojciech Galuba, Russell
  Howes, Po-Yao Huang, Shang-Wen Li, Ishan Misra, Michael Rabbat, Vasu Sharma,
  Gabriel Synnaeve, Hu~Xu, Herve Jegou, Julien Mairal, Patrick Labatut, Armand
  Joulin, and Piotr Bojanowski.
\newblock {DINO}v2: Learning robust visual features without supervision.
\newblock \emph{Transactions on Machine Learning Research}, 2024.
\newblock ISSN 2835-8856.
\newblock URL \url{https://openreview.net/forum?id=a68SUt6zFt}.
\newblock Featured Certification.

\bibitem[Papamakarios et~al.(2021)Papamakarios, Nalisnick, Rezende, Mohamed,
  and Lakshminarayanan]{papamakarios2021normalizing}
George Papamakarios, Eric Nalisnick, Danilo~Jimenez Rezende, Shakir Mohamed,
  and Balaji Lakshminarayanan.
\newblock Normalizing flows for probabilistic modeling and inference.
\newblock \emph{Journal of Machine Learning Research}, 22\penalty0
  (57):\penalty0 1--64, 2021.

\bibitem[Park et~al.(2023)Park, Choe, and
  Veitch]{park2024linearrepresentationhypothesisgeometry}
Kiho Park, Yo~Joong Choe, and Victor Veitch.
\newblock The linear representation hypothesis and the geometry of large
  language models.
\newblock In \emph{Causal Representation Learning Workshop at NeurIPS 2023},
  2023.
\newblock URL \url{https://openreview.net/forum?id=T0PoOJg8cK}.

\bibitem[Pedregosa et~al.(2011)Pedregosa, Varoquaux, Gramfort, Michel, Thirion,
  Grisel, Blondel, Prettenhofer, Weiss, Dubourg, Vanderplas, Passos,
  Cournapeau, Brucher, Perrot, and Duchesnay]{scikit-learn}
F.~Pedregosa, G.~Varoquaux, A.~Gramfort, V.~Michel, B.~Thirion, O.~Grisel,
  M.~Blondel, P.~Prettenhofer, R.~Weiss, V.~Dubourg, J.~Vanderplas, A.~Passos,
  D.~Cournapeau, M.~Brucher, M.~Perrot, and E.~Duchesnay.
\newblock Scikit-learn: Machine learning in {P}ython.
\newblock \emph{Journal of Machine Learning Research}, 12:\penalty0 2825--2830,
  2011.

\bibitem[Pennec(2006)]{pennec2006intrinsic}
Xavier Pennec.
\newblock Intrinsic statistics on {Riemannian} manifolds: Basic tools for
  geometric measurements.
\newblock \emph{Journal of Mathematical Imaging and Vision}, 25:\penalty0
  127--154, 2006.
\newblock \doi{10.1007/s10851-006-6228-4}.

\bibitem[Pennec(2018)]{pennec2018barycentric}
Xavier Pennec.
\newblock {Barycentric subspace analysis on manifolds}.
\newblock \emph{The Annals of Statistics}, 46\penalty0 (6A):\penalty0 2711 --
  2746, 2018.
\newblock \doi{10.1214/17-AOS1636}.
\newblock URL \url{https://doi.org/10.1214/17-AOS1636}.

\bibitem[Radford et~al.(2021)Radford, Kim, Hallacy, Ramesh, Goh, Agarwal,
  Sastry, Askell, Mishkin, Clark, et~al.]{radford2021learning}
Alec Radford, Jong~Wook Kim, Chris Hallacy, Aditya Ramesh, Gabriel Goh,
  Sandhini Agarwal, Girish Sastry, Amanda Askell, Pamela Mishkin, Jack Clark,
  et~al.
\newblock Learning transferable visual models from natural language
  supervision.
\newblock In \emph{International conference on machine learning}, pages
  8748--8763. PmLR, 2021.

\bibitem[Rajendran et~al.(2024)Rajendran, Buchholz, Aragam, Sch{\"o}lkopf, and
  Ravikumar]{rajendran2024causal}
Goutham Rajendran, Simon Buchholz, Bryon Aragam, Bernhard Sch{\"o}lkopf, and
  Pradeep Ravikumar.
\newblock From causal to concept-based representation learning.
\newblock \emph{Advances in Neural Information Processing Systems},
  37:\penalty0 101250--101296, 2024.

\bibitem[Reizinger et~al.(2022)Reizinger, Gresele, Brady, Von~K{\"u}gelgen,
  Zietlow, Sch{\"o}lkopf, Martius, Brendel, and Besserve]{reizinger2022embrace}
Patrik Reizinger, Luigi Gresele, Jack Brady, Julius Von~K{\"u}gelgen, Dominik
  Zietlow, Bernhard Sch{\"o}lkopf, Georg Martius, Wieland Brendel, and Michel
  Besserve.
\newblock Embrace the gap: Vaes perform independent mechanism analysis.
\newblock \emph{Advances in Neural Information Processing Systems},
  35:\penalty0 12040--12057, 6 2022.
\newblock \doi{10.48550/arxiv.2206.02416}.

\bibitem[Rezende and Mohamed(2015)]{pmlr-v37-rezende15}
Danilo Rezende and Shakir Mohamed.
\newblock Variational inference with normalizing flows.
\newblock In Francis Bach and David Blei, editors, \emph{Proceedings of the
  32nd International Conference on Machine Learning}, volume~37 of
  \emph{Proceedings of Machine Learning Research}, pages 1530--1538, Lille,
  France, 07--09 Jul 2015. PMLR.
\newblock URL \url{https://proceedings.mlr.press/v37/rezende15.html}.

\bibitem[Rusak et~al.(2025)Rusak, Reizinger, Juhos, Bringmann, Zimmermann, and
  Brendel]{rusak2024contrastive}
Evgenia Rusak, Patrick Reizinger, Attila Juhos, Oliver Bringmann, Roland~S.
  Zimmermann, and Wieland Brendel.
\newblock Infonce: Identifying the gap between theory and practice.
\newblock In \emph{Proceedings of the 28th International Conference on
  Artificial Intelligence and Statistics}, volume 258 of \emph{Proceedings of
  Machine Learning Research}, Mai Khao, Thailand, 2025. PMLR.

\bibitem[Sch{\"o}lkopf et~al.(2021)Sch{\"o}lkopf, Locatello, Bauer, Ke,
  Kalchbrenner, Goyal, and Bengio]{scholkopf2021toward}
Bernhard Sch{\"o}lkopf, Francesco Locatello, Stefan Bauer, Nan~Rosemary Ke, Nal
  Kalchbrenner, Anirudh Goyal, and Yoshua Bengio.
\newblock Toward causal representation learning.
\newblock \emph{Proceedings of the IEEE}, 109\penalty0 (5):\penalty0 612--634,
  5 2021.
\newblock ISSN 0018-9219.
\newblock \doi{10.1109/jproc.2021.3058954}.

\bibitem[Sharkey et~al.(2025)Sharkey, Chughtai, Batson, Lindsey, Wu, Bushnaq,
  Goldowsky-Dill, Heimersheim, Ortega, Bloom, et~al.]{sharkey2025open}
Lee Sharkey, Bilal Chughtai, Joshua Batson, Jack Lindsey, Jeff Wu, Lucius
  Bushnaq, Nicholas Goldowsky-Dill, Stefan Heimersheim, Alejandro Ortega,
  Joseph Bloom, et~al.
\newblock Open problems in mechanistic interpretability.
\newblock \emph{arXiv preprint arXiv:2501.16496}, 2025.

\bibitem[Sim{\'e}oni et~al.(2025)Sim{\'e}oni, Vo, Seitzer, Baldassarre, Oquab,
  Jose, Khalidov, Szafraniec, Yi, Ramamonjisoa, Massa, Haziza, Wehrstedt, Wang,
  Darcet, Moutakanni, Sentana, Roberts, Vedaldi, Tolan, Brandt, Couprie,
  Mairal, J{\'e}gou, Labatut, and Bojanowski]{simeoni2025dinov3}
Oriane Sim{\'e}oni, Huy~V. Vo, Maximilian Seitzer, Federico Baldassarre, Maxime
  Oquab, Cijo Jose, Vasil Khalidov, Marc Szafraniec, Seungeun Yi, Micha{\"e}l
  Ramamonjisoa, Francisco Massa, Daniel Haziza, Luca Wehrstedt, Jianyuan Wang,
  Timoth{\'e}e Darcet, Th{\'e}o Moutakanni, Leonel Sentana, Claire Roberts,
  Andrea Vedaldi, Jamie Tolan, John Brandt, Camille Couprie, Julien Mairal,
  Herv{\'e} J{\'e}gou, Patrick Labatut, and Piotr Bojanowski.
\newblock {DINOv3}, 2025.
\newblock URL \url{https://arxiv.org/abs/2508.10104}.

\bibitem[Suter et~al.(2019)Suter, Miladinovic, Sch{\"o}lkopf, and
  Bauer]{suter2019robustly}
Raphael Suter, Djordje Miladinovic, Bernhard Sch{\"o}lkopf, and Stefan Bauer.
\newblock Robustly disentangled causal mechanisms: Validating deep
  representations for interventional robustness.
\newblock In \emph{International Conference on Machine Learning}, pages
  6056--6065. PMLR, 2019.

\bibitem[Templeton et~al.(2024)Templeton, Conerly, Marcus, Lindsey, Bricken,
  Chen, Pearce, Citro, Ameisen, Jones, Cunningham, Turner, McDougall,
  MacDiarmid, Freeman, Sumers, Rees, Batson, Jermyn, Carter, Olah, and
  Henighan]{templeton2024scaling}
Adly Templeton, Tom Conerly, Jonathan Marcus, Jack Lindsey, Trenton Bricken,
  Brian Chen, Adam Pearce, Craig Citro, Emmanuel Ameisen, Andy Jones, Hoagy
  Cunningham, Nicholas~L Turner, Callum McDougall, Monte MacDiarmid, C.~Daniel
  Freeman, Theodore~R. Sumers, Edward Rees, Joshua Batson, Adam Jermyn, Shan
  Carter, Chris Olah, and Tom Henighan.
\newblock Scaling monosemanticity: Extracting interpretable features from
  claude 3 sonnet.
\newblock
  \url{https://transformer-circuits.pub/2024/scaling-monosemanticity/index.html},
  2024.
\newblock URL
  \url{https://transformer-circuits.pub/2024/scaling-monosemanticity/index.html}.
\newblock Transformer Circuits Thread.

\bibitem[Tod(1992)]{tod1992choosing}
KP~Tod.
\newblock On choosing coordinates to diagonalize the metric.
\newblock \emph{Classical and Quantum Gravity}, 9\penalty0 (7):\penalty0
  1693--1705, 1992.

\bibitem[Virtanen et~al.(2020)Virtanen, Gommers, Oliphant, Haberland, Reddy,
  Cournapeau, Burovski, Peterson, Weckesser, Bright, van~der Walt, Brett,
  Wilson, Millman, Mayorov, Nelson, Jones, Kern, Larson, Carey, Polat, Feng,
  Moore, VanderPlas, Laxalde, Perktold, Cimrman, Henriksen, Quintero, Harris,
  Archibald, Ribeiro, Pedregosa, van Mulbregt, and
  Contributors]{virtanen2020scipy}
Pauli Virtanen, Ralf Gommers, Travis~E. Oliphant, Matt Haberland, Tyler Reddy,
  David Cournapeau, Evgeni Burovski, Pearu Peterson, Warren Weckesser, Jonathan
  Bright, St{\'e}fan~J. van~der Walt, Matthew Brett, Joshua Wilson, K.~Jarrod
  Millman, Nikolay Mayorov, Andrew R.~J. Nelson, Eric Jones, Robert Kern, Eric
  Larson, C.~J. Carey, {\.I}lhan Polat, Yu~Feng, Eric~W. Moore, Jake
  VanderPlas, Denis Laxalde, Josef Perktold, Robert Cimrman, Ian Henriksen,
  E.~A. Quintero, Charles~R. Harris, Anne~M. Archibald, Ant{\^o}nio~H. Ribeiro,
  Fabian Pedregosa, Paul van Mulbregt, and SciPy~1.0 Contributors.
\newblock {SciPy} 1.0: Fundamental algorithms for scientific computing in
  {Python}.
\newblock \emph{Nature Methods}, 17:\penalty0 261--272, 2020.
\newblock \doi{10.1038/s41592-019-0686-2}.

\bibitem[Wang et~al.(2024)Wang, Chen, Tang, Wu, and Zhu]{wang2024disentangled}
Xin Wang, Hong Chen, Si'ao Tang, Zihao Wu, and Wenwu Zhu.
\newblock Disentangled representation learning.
\newblock \emph{IEEE Transactions on Pattern Analysis and Machine
  Intelligence}, 46\penalty0 (12):\penalty0 9677--9696, 2024.

\bibitem[Wang(2016)]{wang2016normalcoordinates}
Zuoqin Wang.
\newblock Lecture 14: Normal coordinates.
\newblock Course notes for Riemannian Geometry, University of Science and
  Technology of China, 2016.
\newblock URL
  \url{http://staff.ustc.edu.cn/~wangzuoq/Courses/16S-RiemGeom/Notes/Lec14.pdf}.

\bibitem[Watanabe(1960)]{5392532}
Satosi Watanabe.
\newblock Information theoretical analysis of multivariate correlation.
\newblock \emph{IBM Journal of Research and Development}, 4\penalty0
  (1):\penalty0 66--82, 1960.
\newblock \doi{10.1147/rd.41.0066}.

\bibitem[Wei et~al.(2024)Wei, Gupta, and Morgado]{wei2024towards}
Yibing Wei, Abhinav Gupta, and Pedro Morgado.
\newblock Towards latent masked image modeling for self-supervised visual
  representation learning.
\newblock In \emph{European Conference on Computer Vision}, pages 1--17.
  Springer, 2024.

\bibitem[Williams et~al.(2025)Williams, Yu, Luu, Arvanitidis, and
  Klami]{williams2025geodesic}
Bernardo Williams, Hanlin Yu, Hoang Phuc~Hau Luu, Georgios Arvanitidis, and
  Arto Klami.
\newblock Geodesic slice sampler for multimodal distributions with strong
  curvature.
\newblock \emph{arXiv preprint arXiv:2502.21190}, 2025.

\bibitem[Wu et~al.(2021)Wu, Lischinski, and Shechtman]{wu2021stylespace}
Zongze Wu, Dani Lischinski, and Eli Shechtman.
\newblock Stylespace analysis: Disentangled controls for stylegan image
  generation.
\newblock In \emph{Proceedings of the IEEE/CVF Conference on Computer Vision
  and Pattern Recognition (CVPR)}, pages 12863--12872, 2021.

\bibitem[Yang et~al.(2021)Yang, Liu, Chen, Shen, Hao, and
  Wang]{yang_causalvae_2021}
Mengyue Yang, Furui Liu, Zhitang Chen, Xinwei Shen, Jianye Hao, and Jun Wang.
\newblock {CausalVAE}: Disentangled representation learning via neural
  structural causal models.
\newblock In \emph{Proceedings of the IEEE/CVF Conference on Computer Vision
  and Pattern Recognition (CVPR)}, pages 9588--9597, 2021.

\bibitem[Zhang and Fletcher(2013)]{ppga}
Miaomiao Zhang and P~Fletcher.
\newblock Probabilistic principal geodesic analysis.
\newblock \emph{NIPS}, 01 2013.

\bibitem[Zhang et~al.(2019)Zhang, Xing, and Zhang]{zhang2019mixture}
Youshan Zhang, Jiarui Xing, and Miaomiao Zhang.
\newblock Mixture probabilistic principal geodesic analysis.
\newblock In \emph{International Workshop on Multimodal Brain Image Analysis},
  pages 196--208. Springer, 2019.

\bibitem[Zheng et~al.(2022)Zheng, Ng, and Zhang]{zheng2022identifiability}
Yujia Zheng, Ignavier Ng, and Kun Zhang.
\newblock On the identifiability of nonlinear ica: Sparsity and beyond.
\newblock \emph{Advances in neural information processing systems},
  35:\penalty0 16411--16422, 2022.

\end{thebibliography}

\newpage
\appendix

\section{Background on Riemannian geometry}
\label{app:geometry-background}

We briefly review the tools from Riemannian geometry that we use in Riemannian ICA, following \cite{Lee00,lee2018introduction}.
Throughout this appendix we use Einstein summation convention, so repeated upper and lower coordinate indices are summed unless stated otherwise.

\paragraph{Smooth manifolds}
A \emph{smooth manifold} $\M$ \cite[Chapter 2]{Lee00} is a space that can be locally parameterized by a set of coordinates on an open subset of $\R^n$.
When referring to an abstract manifold, we write $\p \in \M$ for a point on the manifold.
We reserve $x$ for the coordinate system in which we observe data, so that $\x = x(\p)$ represents a data point.
Regions of a manifold can be parameterized by different coordinate systems, and by definition of a smooth manifold, there exists a diffeomorphism between any two coordinate systems called a transition map.
For example, if $x$ and $z$ are two coordinate systems on a manifold $\M$, then the transition map between $\z = z(\p)$ and $\x = x(\p)$ is given by $\x = \phi(\z)$, where $\phi = x \circ z^{-1}$.

A tangent vector $v \in T_\p\M$ \cite[Chapter 3]{Lee00} is a geometric object representing a direction of infinitesimal change on the manifold that lives in the tangent space $T_\p\M$ at the point $\p$.
In the $x$-coordinate system, we let $v_x \in \R^n$ denote the column vector of components of $v$, so that $v = v_x^i \frac{\partial}{\partial x^i}$.
Tangent vectors are coordinate-independent objects, so in a different coordinate system $z$, we have that $v = v_z^i \frac{\partial}{\partial z^i}$, where $v_x = \frac{\partial \phi(\z)}{\partial \z}\, v_z$, equivalently $v_z = \frac{\partial \phi(\z)}{\partial \z}^{-1}v_x$, and $\phi(\z)$ denotes the transition map from $z$-coordinates to $x$-coordinates.

\paragraph{Riemannian manifolds}
A \emph{Riemannian manifold} \cite{lee2018introduction} is a smooth manifold equipped with a \emph{Riemannian metric} \cite[Chapter 13]{Lee00}.
The metric assigns to each point $\p \in \M$ a positive-definite inner product $g_\p \colon T_\p\M \times T_\p\M \to \R$, and this assignment varies smoothly with $\p$.
We denote the component matrix of $g_\p$ in $x$-coordinates at $\x = x(\p)$ by $g_x(\x)$.
The inner product of two tangent vectors $u, v \in T_\p\M$ is then $g(u, v) = u_x^\top g_x(\x)\, v_x$.
We write $g(u, v)$ rather than $g_\p(u, v)$ because the basepoint $\p$ is already carried by $u$ and $v$.
These components can be computed in the $z$-coordinate system as $g_z(\z) = \frac{\partial \phi(\z)}{\partial \z}^\top g_x(\x)\, \frac{\partial \phi(\z)}{\partial \z}$.

A Riemannian metric induces a notion of orthogonality.
Two tangent vectors $u, v \in T_\p\M$ are $g$-orthogonal if $g(u,v) = 0$.
A set of tangent vectors $(v^{(1)}, \ldots, v^{(n)})$ at $\p$ is $g$-orthonormal if $g(v^{(i)}, v^{(j)}) = \delta_{ij}$, so distinct vectors are $g$-orthogonal and each vector has unit $g$-norm.
An orthonormal basis of $T_\p\M$ is a set of $g$-orthonormal tangent vectors that span the tangent space.
Suppose we extract the components of a $g$-orthonormal basis of tangent vectors and arrange them into a matrix $J_x$, where the $i$-th column contains the component vector $v_x^{(i)} \in \R^n$.
Then we say that $J_x$ is a $g$-orthogonal matrix and it satisfies $J_x^\top g_x(\x)\, J_x = I_n$.
This equation also shows that its left inverse is $J_x^{-1} = J_x^\top g_x(\x)$.

\paragraph{Tensors}
The tangent vectors $u, v$ and the metric $g$ are \emph{tensors}.
A tensor is a geometric object that exists independently of any coordinate system.
Each tensor has a type $(k, l)$, where $k$ is the number of upper indices and $l$ the number of lower indices of its components.
A tangent vector is a $(1,0)$-tensor, and so its component form is written as $v = v_x^i \frac{\partial}{\partial x^i}$.
The metric is a $(0,2)$-tensor, and so its component form is written as $g = g_{x,ij} dx^i \otimes dx^j$.
The type fixes how the components of a tensor transform under a change of coordinates to keep the underlying object fixed while its components change.
We have already seen this transformation rule for tangent vectors and the metric.
The components of a $(1,0)$-tensor transform as $v_z = \frac{\partial \phi(\z)}{\partial \z}^{-1} v_x$, and the components of a $(0,2)$-tensor transform as $g_z(\z) = \frac{\partial \phi(\z)}{\partial \z}^\top g_x(\x)\, \frac{\partial \phi(\z)}{\partial \z}$.
Together these rules keep the inner product invariant, so that $g(u,v) = u_x^\top g_x(\x)\, v_x = u_z^\top g_z(\z)\, v_z$.

\paragraph{Geodesics}
A \emph{geodesic}, under the Levi-Civita connection \cite[Chapter 5]{lee2018introduction}, is a curve $\x(t)$ with an initial velocity $v \propto \frac{\partial \x(t)}{\partial t}\big|_{t=0}$, $\| v \|_g = 1$, that satisfies the geodesic equation \cite[Theorem 4.10]{lee2018introduction}
\begin{equation}
  \label{app:eq:geodesic}
  \frac{\partial^2 \x(t)}{\partial t^2}^k + \Gamma_x(\x(t))\, ^k_{ij}\, \frac{\partial \x(t)}{\partial t}^i \frac{\partial \x(t)}{\partial t}^j = 0 \quad \forall k = 1, \ldots, n
\end{equation}
where $\Gamma_x(\x)^k_{ij}$ are the \emph{Christoffel symbols} of $g$ in the $x$-coordinate system, defined by
\begin{equation}
  \label{app:eq:christoffel}
  \Gamma_x(\x)^k_{ij} = \frac{1}{2}{g_x^{-1}(\x)}^{kl}\left(\frac{\partial g_x(\x)_{jl}}{\partial x^i} + \frac{\partial g_x(\x)_{il}}{\partial x^j} - \frac{\partial g_x(\x)_{ij}}{\partial x^l}\right)
\end{equation}
The Christoffel symbols express how neighboring points of the manifold are ``connected'' (via the Levi-Civita connection \cite[Chapter 5]{lee2018introduction}) so that it is possible to define an abstract notion of a directional derivative on a Riemannian manifold that does not require a choice of coordinate system \cite[Chapter 4]{lee2018introduction}.

\paragraph{Exponential and logarithmic maps}
The exponential map $\exp_{\x_0}(v_x)$ is the point on the geodesic that is $1$ unit of time away from $\x_0$ with initial velocity $v_x$.
It can be computed by simulating the geodesic equation with initial velocity $v_x$ and integrating for $1$ unit of time.
The corresponding geodesic between nearby endpoints can also be characterized variationally.
For a nearby endpoint $\x$, let $\x^*(t)$ denote the energy-minimizing curve from $\x_0$ to $\x$.
\begin{equation}
  \label{app:eq:geodesic-variational}
  \x^*(\cdot) = \arg \min_{\bm{\gamma} \colon [0,1] \to \M} \int_0^1 \frac{\partial \bm{\gamma}(t)^\top}{\partial t}\, g_x(\bm{\gamma}(t))\, \frac{\partial \bm{\gamma}(t)}{\partial t}\, dt
  \quad \text{subject to } \bm{\gamma}(0) = \x_0 \text{ and } \bm{\gamma}(1) = \x
\end{equation}
In a normal neighborhood of $\x_0$, this curve is unique, and the logarithmic map recovers its initial velocity.
\begin{equation}
  \label{app:eq:log-map-initial-velocity}
  \log_{\x_0}(\x) = \frac{\partial \x^*(t)}{\partial t}\bigg|_{t=0}
\end{equation}
Equivalently, the exponential map is the inverse operation that follows the unique geodesic with initial velocity $v_x$ for $1$ unit of time.

\paragraph{Riemann normal coordinates}
\label{app:sec:rnc}
Let $\p_0 \in \M$ with coordinates $\x_0$ and let $J_x \in \R^{n \times n}$ be a $g$-orthogonal matrix at $\x_0$, satisfying $J_x^\top g_x(\x_0)\, J_x = I_n$.
The columns of $J_x$ form an orthonormal basis of $T_{\p_0}\M$ under the metric $g$.
Riemann normal coordinates \cite[Chapter 5]{lee2018introduction} form a coordinate system in which variations of the coordinates trace geodesics whose initial velocity vectors are the columns of $J_x$.
In a normal neighborhood of $\p_0$, the normal coordinates, denoted by $\s$, are defined by $\s = h_{\scriptstyle \x_0, J_x}(\x) := J_x^{-1}\log_{\x_0}(\x)$, where $\log_{\x_0}$ is the logarithmic map.
At the origin, the value and first two derivatives of the normal coordinate chart are given by
\begin{equation}
  \label{app:eq:rnc-map}
  h_{\scriptstyle \x_0, J_x}(\x_0) = \zero, \quad
  \frac{\partial h_{\scriptstyle \x_0, J_x}(\x_0)}{\partial \x} = J_x^{-1}, \quad
  \frac{\partial^2 h_{\scriptstyle \x_0, J_x}^i(\x_0)}{\partial x^j \partial x^k} = \frac{\partial h_{\scriptstyle \x_0, J_x}^i(\x_0)}{\partial x^l}\, \Gamma_x(\x_0)^l_{jk}
\end{equation}
This means that given a coordinate description of a geometric object at a particular point, it is possible to explicitly construct the normal coordinate chart to perform computations.
An important consequence of this is that the inverse of the normal coordinate chart, $h_{\scriptstyle \x_0, J_x}^{-1}(\s) \mapsto \x$, can be treated as a decoder from normal coordinates to the data space.
At the origin, the Jacobian of this decoder is $J_x$:
\begin{equation}
  \label{app:eq:rnc-decoder-jacobian}
  \frac{\partial h_{\scriptstyle \x_0, J_x}^{-1}(\zero)}{\partial \s} = J_x, \quad \text{where } \zero = h_{\scriptstyle \x_0, J_x}(\x_0)
\end{equation}

\paragraph{Ricci Curvature Tensor}
\label{app:sec:ricci-curvature-tensor}
The Ricci curvature tensor is a $(0,2)$-tensor that is defined by
\begin{align}
  \label{app:eq:ricci-curvature-tensor}
  \Ric_x(\x)_{ij} &= g_x(\x)^{kl}\, R_x(\x)_{kijl} \\
  &= \frac{\partial \Gamma_x(\x)^k_{ij}}{\partial x^k} - \frac{\partial \Gamma_x(\x)^k_{ik}}{\partial x^j} + \Gamma_x(\x)^a_{ab}\,\Gamma_x(\x)^b_{ij} - \Gamma_x(\x)^a_{ib}\,\Gamma_x(\x)^b_{aj}
\end{align}
where $R_x(\x)_{kijl}$ is the $k,i,j,l$ component of the Riemann curvature tensor evaluated at $\x$.
We will not discuss the Riemann curvature tensor here.
See \cite[Chapter 7]{lee2018introduction} for more details.
The curvature tensor appears in the second derivative of the metric components in normal coordinates at the origin.
In particular, the metric in normal coordinates at the origin is Euclidean to first order so that its value and first two derivatives are given by \cite{wang2016normalcoordinates}
\begin{equation}
  \label{app:eq:rnc-metric}
  g_s(\zero)_{ij} = \delta_{ij},
  \quad
  \frac{\partial g_s(\zero)_{ij}}{\partial s^k} = 0,
  \quad
  \frac{\partial^2 g_s(\zero)_{ij}}{\partial s^k \partial s^l}
  = \frac{1}{3}\bigl(R_s(\zero)_{kilj} + R_s(\zero)_{likj}\bigr)
\end{equation}
This expression can be used to derive the value and first two derivatives of the log determinant of the metric in normal coordinates at the origin, which are given by
\begin{align}
  \label{app:normal-coordinate-metric}
  \frac{1}{2}\log \det g_s(\zero) = 0,
  \quad
  \frac{\partial \frac{1}{2} \log \det g_s(\zero)}{\partial \s} = \zero,
  \quad
  \frac{\partial^2 \frac{1}{2} \log \det g_s(\zero)}{\partial \s \partial \s^\top} = -\frac{1}{3}\Ric_s(\zero)
\end{align}
Through this equation and how it is used in \Cref{app:thm:disentanglement-tensor}, we interpret Ricci curvature as specifying how the change of variables formula for normal coordinates on a Riemannian manifold differs from its Euclidean counterpart.

\paragraph{Covariant Hessian}
\label{app:sec:covariant-hessian}
For a smooth scalar function $\psi \colon \M \to \R$, the \emph{covariant Hessian} $\covhess \psi(\p_0)$ at a basepoint $\p_0 \in \M$ is a $(0,2)$-tensor.
It is given by
\begin{equation}
  \label{app:eq:cov-hessian}
  \covhess \psi(\p_0)(u, v)
  = \Bigl(\frac{\partial^2 \psi(\x_0)}{\partial x^i \partial x^j}
    - \Gamma_x(\x_0)^k_{ij}\,\frac{\partial \psi(\x_0)}{\partial x^k}\Bigr)
    u_x^i\, v_x^j = \frac{\partial^2 \psi(\zero)}{\partial s^i \partial s^j}\, u_s^i\, v_s^j
\end{equation}
The covariant Hessian is the coordinate-invariant generalization of the Hessian of a function on a Euclidean space.
In normal coordinates at the origin the Christoffel symbols vanish, so it reduces to the ordinary Hessian, as the second equality in \Cref{app:eq:cov-hessian} shows.

\paragraph{Probability density on a manifold}
\label{app:sec:hausdorff-density}
Let $(\M, g)$ be a Riemannian manifold with Riemannian volume element $dV_g$.
A probability measure $\mu$ on $\M$ that is absolutely continuous with respect to the volume measure has a Hausdorff density $\rho \colon \M \to \R_{\geq 0}$ satisfying $d\mu = \rho\, dV_g$.
The probability that a random point $\p$ lies in a measurable set $D \subseteq \M$ is
\begin{equation}
  \label{app:eq:density}
  \mathbb{P}(\p \in D) = \int_D \rho\, dV_g
  = \mathbb{P}(\x \in D_x) = \int_{D_x} \rho_x(\x)\, \sqrt{\det g_x(\x)}\, d\x
\end{equation}
where $D_x$ is the image of $D$ under the coordinate map $x$.

\section{Proofs}
\label{app:proofs}

\PointwiseChangeOfVariablesFormula*
\label{app:prop:pointwise-change-of-variables-formula}
\begin{proof}
  The invariant object that we need to start with is the Hausdorff density.
  \begin{align}
    \rho_x(\x_0) = p_x(\x_0)/\sqrt{\det g_x(\x_0)}
  \end{align}
  $\rho$ is coordinate-free because the numerator and denominator both transform the same way.
  In normal coordinates $\s = h_{\scriptstyle \x_0, J_x}(\x)$, we have
  \begin{align}
    \rho_s(\zero) = p_s(\s)/\sqrt{\det g_s(\zero)}
  \end{align}
  where $\rho_s(\zero) = \rho_x(\x_0)$.
  This gives us the change of variables formula:
  \begin{align}
    \label{app:eq:general-change-of-variables}
    \log p_s(\zero) = \log p_x(\x_0) - \frac{1}{2}\log \det g_x(\x_0) + \frac{1}{2}\log \det g_s(\zero)
  \end{align}
  Since the metric in normal coordinates is the identity at the origin, the term $\frac{1}{2}\log \det g_s(\zero)=0$.
  The last step is to show how the determinant of $J_x$ relates to the determinant of the metric components.
  Since $J_x^\top g_x(\x_0) J_x = I_n$, we have that $\frac{1}{2}\log \det g_x(\x_0) = -\log |\det J_x|$.
  Plugging these into \Cref{app:eq:general-change-of-variables} yields the result.
\end{proof}

\DisentanglementTensor*
\label{app:thm:disentanglement-tensor}
\begin{proof}
  The result can be obtained simply by computing the Hessian of \Cref{eq:normal-change-of-variables-formula}.
  First, we note that derivatives in normal coordinates are given by the covariant derivative at the origin.
  \begin{align}
    \log p_s(\zero) &= \log p_x(\x_0) - \frac{1}{2}\log \det g_x(\x_0) + \frac{1}{2}\log \det g_s(\zero) \\
    &= \log \rho_x(\x_0) + \frac{1}{2}\log \det g_s(\zero)
  \end{align}
  For any scalar function $\psi$ on $\M$, the $s$-coordinate expression of $\psi$ is $\psi_s(\s) = \psi_x(\x(\s))$, and because Christoffel symbols vanish at $\s = \zero$, \Cref{app:eq:cov-hessian} reduces to
  \begin{equation}
    \frac{\partial^2 \psi_s(\zero)}{\partial \s \partial \s^\top} = (\covhess \psi)_s(\zero)
  \end{equation}

  From here, we apply the Hessian:
  \begin{align}
    \frac{\partial^2 \log p_s(\zero)}{\partial \s \partial \s^\top} &= \frac{\partial^2 \log \rho_x(x(\zero))}{\partial \s \partial \s^\top} + \frac{1}{2}\frac{\partial^2 \log \det g_s(\zero)}{\partial \s \partial \s^\top}
  \end{align}
  Since $\rho$ is a coordinate-free quantity, $\frac{\partial^2 \log \rho_x(x(\zero))}{\partial \s \partial \s^\top}$ contains components of $\nabla^2 \log \rho$ in the normal coordinate system at the origin, and using \Cref{app:normal-coordinate-metric} we get that the log determinant term is $-\frac{1}{3}\Ric$ in normal coordinates.
  This proves the result.
\end{proof}

\DisentanglementTensorCoordinates*
\label{app:prop:disentanglement-tensor-coordinates}
\begin{proof}
  Recall that the disentanglement tensor is defined as:
  \begin{equation}
    \mathcal{D}_x(\x_0) = \nabla^2 \log \rho_x(\x_0) - \tfrac{1}{3}\Ric_x(\x_0)
  \end{equation}
  From \Cref{app:eq:ricci-curvature-tensor}, the Ricci tensor expands as:
  \begin{align}
    \Ric_x(\x_0)_{ij} &= \frac{\partial \Gamma_x(\x_0)^k_{ij}}{\partial x^k} - \frac{\partial \Gamma_x(\x_0)^k_{ik}}{\partial x^j} + \Gamma_x(\x_0)^a_{ab}\,\Gamma_x(\x_0)^b_{ij} - \Gamma_x(\x_0)^a_{ib}\,\Gamma_x(\x_0)^b_{aj}
  \end{align}

  An alternative expression that is more useful for computation is:
  \begin{equation}
    \Ric_x(\x_0)_{ij} = \frac{\partial \Gamma_x(\x_0)^k_{ij}}{\partial x^k} - \Gamma_x(\x_0)^a_{ib}\,\Gamma_x(\x_0)^b_{aj} - \frac{1}{2}\covhess\log\det g_x(\x_0)_{ij}
  \end{equation}
  where $\covhess\log\det g_x(\x_0)_{ij}$ is the covariant Hessian \Cref{app:sec:covariant-hessian} of the log determinant of the metric.
  This follows from the identity that $\Gamma_x(\x)^a_{ai} = \frac{1}{2}\frac{\partial \log\det g_x(\x)}{\partial x^i}$.
  With this, we can verify this by direct computation:
  \begin{align}
    \frac{1}{2}\covhess\log\det g_x(\x_0)_{ij} &= \frac{\partial \Gamma_x(\x_0)^a_{ai}}{\partial x^j} - \Gamma_x(\x_0)^a_{ab}\,\Gamma_x(\x_0)^b_{ij}
  \end{align}
  The final step is expanding the Hausdorff density in terms of the log-density and the metric and grouping terms together.
  \begin{align}
    \mathcal{D}_x(\x_0)_{ij} &= \nabla^2 \log \rho_x(\x_0)_{ij} - \tfrac{1}{3}\Ric_x(\x_0)_{ij} \\
    &= \left(\nabla^2 \log \frac{p_x(\x_0)}{\sqrt{\det g_x(\x_0)}}\right)_{ij} - \frac{1}{3}\left(\frac{\partial \Gamma_x(\x_0)^k_{ij}}{\partial x^k} - \Gamma_x(\x_0)^a_{ib}\,\Gamma_x(\x_0)^b_{aj} - \frac{1}{2}\covhess\log\det g_x(\x_0)_{ij}\right) \\
    &= \nabla^2 \log p_x(\x_0)_{ij} - \frac{1}{3}\covhess\log\det g_x(\x_0)_{ij} - \frac{1}{3}\frac{\partial \Gamma_x(\x_0)^k_{ij}}{\partial x^k} + \frac{1}{3}\Gamma_x(\x_0)^a_{ib}\,\Gamma_x(\x_0)^b_{aj}
  \end{align}
  Expanding the covariant Hessian term gives the first part of the proposition.

  The second part relating the components of the disentanglement tensor in the data space and normal coordinates comes from the tensor transformation rule for $(0,2)$-tensors, and the fact that the Jacobian matrix of $h_{\scriptstyle \x_0, J_x}^{-1}(\s) \mapsto \x$ is $J_x$ at $\s = \zero$ (\Cref{app:eq:rnc-decoder-jacobian}).
  This gives us the result:
  \begin{align}
    \mathcal{D}_s(\zero) &= {J_x}^\top \mathcal{D}_x(\x_0) {J_x}
  \end{align}
\end{proof}

\PointwiseDisentanglementRICA*
\label{app:thm:rica}
\begin{proof}
  Let $J^*_x$ be the solution to the generalized eigenvalue problem, ${J^*_x}^\top \mathcal{D}_x(\x_0)\, J^*_x = \Lambda$ where $\Lambda$ is diagonal and ${J^*_x}^\top g_x(\x_0)\, J^*_x = I_n$.
  Let $h_{\scriptstyle \x_0, J^*_x}$ be the normal coordinate chart with orientation $J^*_x$, let $p_s = (h_{\scriptstyle \x_0, J^*_x})_\# p_x$, and let $\mathcal{D}_s(\zero)$ be the components of $\mathcal{D}$ in this chart.
  Since $\mathcal{D}$ is a $(0,2)$-tensor, its components in the $s$-coordinates are
  \begin{equation}
    \mathcal{D}_s(\zero) = {J^*_x}^\top \mathcal{D}_x(\x_0)\, J^*_x = \Lambda
  \end{equation}
  By \Cref{thm:disentanglement-tensor},
  \begin{equation}
    \frac{\partial^2 \log p_s(\zero)}{\partial \s \partial \s^\top}
    = \mathcal{D}_s(\zero)
    = \Lambda
  \end{equation}
  Therefore the Hessian of the latent log-likelihood is diagonal at the origin, so $h_{\scriptstyle \x_0, J^*_x}$ pointwise disentangles $p_x$ at $\x_0$ by \Cref{def:local-independence}.

  When the entries of $\Lambda$ are distinct, the generalized eigendecomposition is unique up to sign flips and permutations \cite{golub2013matrix}.
  Therefore any other solution has the form $\tilde J_x = J^*_x Q$, where $Q$ is a signed permutation matrix.
  For any $\x$ in the normal coordinate neighborhood, the corresponding latents are related by
  \begin{equation}
    \tilde \s
    = \tilde J_x^{-1}\log_{\x_0}(\x)
    = Q^{-1}{J^*_x}^{-1}\log_{\x_0}(\x)
    = Q^{-1}\s.
  \end{equation}
  Thus the recovered latents are identifiable up to sign flips and permutations.
\end{proof}

\section{Experimental details}
\label{app:experimental-details}

\subsection{Smoothed Monge metric used in Figure~\ref{fig:rica-overview-rica}}
\label{app:figure1-monge-metric}

The RICA panel of \Cref{fig:rica-overview} uses a smoothed variant of the Monge metric of \cite{hartmann2022lagrangian} rather than the literal form $g(\x) = I + \alpha^2\, \nabla \log q(\x)\, \nabla \log q(\x)^\top$, where $q(\x)$ is the density learned by the RealNVP \cite{dinh2017density} in \Cref{fig:rica-overview-rica}.
This was purely for aesthetic purposes as computing the exponential and logarithmic maps in the region displayed was numerically unstable, and we emphasize that the alignment of the geodesic ball with the structure of the data depends on the choice of metric.
We use a smooth metric that is a weighted sum of RealNVP score outer products:
\begin{equation}
  \label{eq:figure1-monge-metric}
  g_x(\x) = I + \alpha^2 \sum_{m=1}^{1000} w_m(\x)\, s_m\, s_m^\top,
  \qquad
  w_m(\x) = \frac{\exp\bigl(-\|\x - \mathbf{a}_m\|^2 / (2 h^2)\bigr)}{\sum_{m'=1}^{1000} \exp\bigl(-\|\x - \mathbf{a}_{m'}\|^2 / (2 h^2)\bigr)}
\end{equation}
The anchor points $\{\mathbf{a}_m\}_{m=1}^{1000} \subset \R^2$ were selected around the two-moons cloud in regions where the RealNVP density is well behaved.
At each anchor point, we computed the score $s_m = \nabla \log q(\mathbf{a}_m)$ using the pretrained RealNVP density.
The scalar $\alpha$ controls the strength of the score term and $h$ sets the bandwidth of the weights $w_m$.
The plotted panel uses $\alpha^2 = 0.05$ and $h = 0.15$.

\subsection{Source recovery experimental details}
\label{app:source-recovery-experimental-details}

This appendix contains the details of the source recovery experiment in \Cref{sec:source-recovery}.
All runs use the generative model in \Cref{eq:tangent-source-gen-model}.
Unless stated otherwise, we use $N = 10{,}000$ samples per draw, where one draw is a random $g$-orthonormal matrix $J_x$ at the fixed base point together with the samples generated from it. We average over $B = 5$ such matrices and $5$ independent random seeds, with base scale $b = 0.3$ and source decay $r_s = 0.85$.
All methods are evaluated on the same samples for each draw.

\paragraph{Manifolds and charts.}
\label{app:source-recovery-experimental-details:manifolds-and-charts}
RICA is evaluated in the intrinsic coordinates $\x$ for each manifold.
FastICA and conformal NL-ICA are evaluated either on these intrinsic coordinates or on alternate coordinates $\z = \phi(\x)$ specified by each table row.
\begin{enumerate}
  \item \emph{Sphere $S^n$}.
  The intrinsic coordinates $\x \in \R^n$ are stereographic coordinates from the north pole, with metric
  \[
    g_x(\x) = 4(1 + \|\x\|^2)^{-2} I_n
  \]
  This is the round metric of the unit sphere in stereographic coordinates, which \cite[Lemma~3.4]{lee2018introduction} obtains as the pullback of the Euclidean metric on $\R^{n+1}$ through the embedding $\phi$ below.
  We use the base point represented by $\x_0 = \zero$ in this chart.
  The alternate coordinates are the ambient embedding $\z = \phi(\x) \in \R^{n+1}$,
  \[
    \phi(\x) =
    \left(\frac{2\x}{1 + \|\x\|^2},\, \frac{\|\x\|^2 - 1}{1 + \|\x\|^2}\right)
  \]
  \item \emph{Hyperbolic space $H^n$}.
  The intrinsic coordinates $\x \in \R^n$ are Poincar\'e ball coordinates, with metric
  \[
    g_x(\x) = 4(1 - \|\x\|^2)^{-2} I_n, \qquad \|\x\| < 1
  \]
  This is the Poincar\'e ball model of unit-curvature hyperbolic space \cite[Theorem~3.5]{lee2018introduction}.
  We use the base point represented by $\x_0 = \zero$ in this chart.
  The alternate coordinates are the Lorentz embedding $\z = \phi(\x) \in \R^{n+1}$,
  \[
    \phi(\x) =
    \left(\frac{2\x}{1 - \|\x\|^2},\, \frac{1 + \|\x\|^2}{1 - \|\x\|^2}\right)
  \]
  \item \emph{Flat torus $T^n$}.
  The intrinsic coordinates $\x \in (-\pi, \pi]^n$ are angle coordinates, with metric
  \[
    g_x(\x) = I_n
  \]
  We use the base point represented by $\x_0 = \zero$ in this chart.
  The alternate coordinates are the sine-cosine embedding $\z = \phi(\x) \in \R^{2n}$,
  \[
    \phi(\x) = (\sin x_1, \cos x_1, \ldots, \sin x_n, \cos x_n)
  \]
  \item \emph{SPD($p$) under the affine-invariant metric}.
  Points are positive definite matrices $P \in \mathrm{SPD}(p)$, and the metric at $P$ is
  \[
    g_P(U,V) = \tr(P^{-1} U P^{-1} V)
  \]
  for tangent matrices $U,V$.
  The intrinsic coordinates are $\x = \operatorname{vech}(\log P) \in \R^n$, where $\log P$ is the matrix logarithm, $\operatorname{vech}$ stacks lower-triangular entries into a vector, and $n = p(p+1)/2$.
  We write $\operatorname{unvech}$ for the inverse map from vector coordinates back to symmetric matrices.
  We use the scaled $\operatorname{vech}$ basis, with diagonal entries weighted by $1$ and off-diagonal entries weighted by $1/\sqrt{2}$.
  Let $P(\x) = \exp(\operatorname{unvech}(\x))$ and $E_i(\x) = \partial P(\x) / \partial x_i$.
  The coordinate representation of the metric is
  \[
    g_x(\x)_{ij} = g_{P(\x)}(E_i(\x), E_j(\x))
  \]
  We use the matrix base point $P_0 = I_p$.
  Its intrinsic coordinate is $\x_0 = \operatorname{vech}(\log I_p) = \zero$, and $g_x(\zero) = I_n$.
  The alternate coordinates are $\z = \phi(\x) \in \R^n$,
  \[
    \phi(\x) = \operatorname{vech}(P(\x))
  \]
\end{enumerate}
The intrinsic dimension is $n$ for sphere, hyperbolic space, and torus, and $n = p(p+1)/2$ for SPD($p$).

\paragraph{Methods.}
\emph{Closed-form RICA} uses the exact local disentanglement tensor in data coordinates, $\mathcal{D}_x(\x_0)$.
For the tangent-source model, the same coordinate matrix can be computed directly from the normal-coordinate tensor $\mathcal{D}_s(\zero)$ as
\begin{equation}
  \mathcal{D}_x(\x_0) = J_x^{-\!\top}\,\mathcal{D}_s(\zero)\,J_x^{-1}
  = J_x^{-\!\top}\,
  \frac{\partial^2 \log p_s(\zero)}{\partial \s\,\partial \s^\top}\,
  J_x^{-1}
\end{equation}
The paper results use this identity as a computational shortcut, and we checked that it agrees with the full derivative-based construction wherever both are tractable.
It recovers $J_x^*$ from the eigendecomposition of $\mathcal{D}_x(\x_0)$ and returns $\hat\s = (J_x^*)^{-1} \log_{\x_0}(\x)$.
\emph{FastICA} runs FastICA \cite{hyvarinen2000ica} on the centered chart input.

\emph{Conformal NL-ICA} restricts the nonlinear mixing function to a conformal map, whose Jacobian has the form $\lambda(\x)O(\x)$ for a scalar function $\lambda$ and an orthogonal matrix field $O$.
The identifiability theorem of \citet{buchholz2022function} assumes dimension $n>2$, independent source coordinates, at most one Gaussian source coordinate, and a mild regularity condition on each source marginal.
Under these assumptions, the conformal ICA model is identifiable up to shifts, signed coordinate permutations, and a shared scale.
Our baseline parameterizes this conformal map as a composition of $L = 8$ M\"obius transformations,
\begin{equation}
  \label{eq:conformal-nlica-baseline}
  \x = f_L \circ \cdots \circ f_1(\s),
  \qquad
  f_\ell(\s) =
  \alpha_\ell O_\ell
  \frac{\s - a_\ell}{\|\s - a_\ell\|^2}
  + b_\ell.
\end{equation}
where $\s$ denotes latent variables, $\x \in \mathbb{R}^{n}$ denotes the chart input, and $\alpha_\ell$, $O_\ell$, $a_\ell$, and $b_\ell$ are the layer parameters used by the implemented map.
We note that each individual M\"obius transformation in \Cref{eq:conformal-nlica-baseline} does not represent the full set of M\"obius transformations because the M\"obius class includes transformations that do not perform the sphere inversion (i.e. divide by $\frac{1}{\|\s - a_\ell\|^2}$).
However, a composition of two layers of \Cref{eq:conformal-nlica-baseline} can undo the sphere inversion when the second layer's inversion center matches the previous layer's output translation.
In practice, we also needed to restrict the parameters for numerical stability so that the denominator does not underflow.

We optimized unconstrained scalars $\tilde\eta_\ell$, unconstrained matrices $A_\ell$, and unconstrained raw vectors $\tilde a_\ell,\tilde b_\ell \in \mathbb{R}^{n}$.
The parameters used in \Cref{eq:conformal-nlica-baseline} are computed via $\alpha_\ell = \exp(4\tanh(\tilde\eta_\ell))$, $O_\ell = \exp(A_\ell - A_\ell^\top)$, $a_\ell = 2\sqrt{n}\tilde a_\ell / \|\tilde a_\ell\|$, and $b_\ell = 2\sqrt{n}\tilde b_\ell / \|\tilde b_\ell\|$.

We train this M\"obius flow by maximum likelihood under a logistic prior whose scales are matched to the source scales of the generative model.
Training runs for $30{,}000$ AdamW steps at batch size $128$, and we return latents from the trained encoder.

NL-ICA performs worse than FastICA on nearly every cell, and the gap widens at $n \approx 64$ because of the spread of the source scales.
With $r_s = 0.85$ the scales $b_i$ are spread over two orders of magnitude at $n = 32$ and over four at $n = 64$.
FastICA does not have this issue because it whitens the observations first, which puts every source on a common scale before separation.
NL-ICA, on the other hand, is trained by maximum likelihood directly on the raw anisotropic data and is only able to recover the large-scale sources.

\paragraph{Sweeps.}
The main table uses $n = 32$ for sphere, hyperbolic space, and torus.
For SPD in the main table, $p = 8$, so the samples are $8 \times 8$ positive-definite matrices with intrinsic dimension $n = 36$.
The eigengap sweep in \Cref{fig:source-recovery-eigvalgap} fixes target dimension $n = 32$ and varies $r_s \in \{0.999, 0.99, 0.97, 0.95, 0.9, 0.85, 0.75, 0.6\}$ at $b = 0.3$, with $p = 8$ for SPD.
The base-scale sweep in \Cref{fig:source-recovery-b-sweep} uses the same target dimension and varies $b \in \{0.05, 0.1, 0.2, 0.35, 0.5, 0.75, 1.0, 1.5, 2.0\}$ at fixed $r_s = 0.85$.
Increasing $b$ pushes samples farther from the base point, which exposes the injectivity-radius failure mode described in the main text.

\paragraph{Higher-dimension table.}
\Cref{tab:source-recovery-n64} reports the same source-recovery setup as \Cref{tab:source-recovery} in the main text but at target intrinsic dimension $n \approx 64$, with sphere, hyperbolic space, and torus at $n = 64$ and SPD at $11 \times 11$ matrices with intrinsic dimension $n = 11(11+1)/2 = 66$.
RICA remains at MCC $\approx 1.0$ on every cell, while the conformal NL-ICA baseline drops from MCC $\approx 0.85$ on the smooth intrinsic charts at $n = 32$ to MCC $\approx 0.50$ at $n = 64$.

\begin{table}[t]
\centering
\resizebox{\textwidth}{!}{%
\begin{tabular}{l l r r @{\hspace{1.5em}} r r @{\hspace{1.5em}} r r}
\toprule
 & & \multicolumn{2}{c}{RICA} & \multicolumn{2}{c}{FastICA} & \multicolumn{2}{c}{NL-ICA} \\
\cmidrule(lr){3-4} \cmidrule(lr){5-6} \cmidrule(lr){7-8}
Manifold & Chart & MCC $\uparrow$ & TC $\downarrow$ & MCC $\uparrow$ & TC $\downarrow$ & MCC $\uparrow$ & TC $\downarrow$ \\
\midrule
Sphere $S^d$ & ambient & \textbf{1.00 $\pm$ 0.00} & \textbf{0.02 $\pm$ 0.06} & 0.91 $\pm$ 0.01 & 1.02 $\pm$ 0.16 & 0.53 $\pm$ 0.01 & 12.04 $\pm$ 0.64 \\
 & stereographic & \textbf{1.00 $\pm$ 0.00} & \textbf{0.02 $\pm$ 0.06} & 0.95 $\pm$ 0.02 & 0.85 $\pm$ 0.07 & 0.50 $\pm$ 0.03 & 13.64 $\pm$ 0.78 \\
\midrule
Hyperbolic $H^d$ & Lorentz & \textbf{1.00 $\pm$ 0.00} & \textbf{0.02 $\pm$ 0.06} & 0.95 $\pm$ 0.00 & 1.45 $\pm$ 0.08 & 0.51 $\pm$ 0.01 & 13.41 $\pm$ 0.47 \\
 & Poincar\'e & \textbf{1.00 $\pm$ 0.00} & \textbf{0.02 $\pm$ 0.06} & 0.98 $\pm$ 0.00 & -0.25 $\pm$ 0.06 & 0.53 $\pm$ 0.01 & 12.27 $\pm$ 0.67 \\
\midrule
Flat torus $T^d$ & angle & \textbf{1.00 $\pm$ 0.00} & \textbf{0.02 $\pm$ 0.06} & 0.98 $\pm$ 0.00 & -0.09 $\pm$ 0.05 & 0.52 $\pm$ 0.01 & 12.85 $\pm$ 0.65 \\
 & sincos & \textbf{1.00 $\pm$ 0.00} & \textbf{0.02 $\pm$ 0.06} & 0.37 $\pm$ 0.01 & 7.47 $\pm$ 0.11 & 0.27 $\pm$ 0.01 & 12.12 $\pm$ 0.25 \\
\midrule
SPD($p$) & log-Euclidean & \textbf{1.00 $\pm$ 0.00} & \textbf{0.01 $\pm$ 0.05} & 0.98 $\pm$ 0.00 & -0.09 $\pm$ 0.05 & 0.50 $\pm$ 0.01 & 14.06 $\pm$ 0.57 \\
 & vech & \textbf{1.00 $\pm$ 0.00} & \textbf{0.01 $\pm$ 0.05} & 0.26 $\pm$ 0.01 & 8.96 $\pm$ 0.09 & 0.32 $\pm$ 0.00 & 13.11 $\pm$ 0.31 \\
\bottomrule
\end{tabular}
}
\vspace{4pt}
\caption{Source recovery at target intrinsic dimension $n \approx 64$.
  Same setup as \Cref{tab:source-recovery} but with the intrinsic dimension doubled.
  Sphere, hyperbolic space, and torus use $n = 64$, while SPD uses $11 \times 11$ positive-definite matrices, which have intrinsic dimension $n = 11(11+1)/2 = 66$.
  RICA recovers the ground truth sources on every cell, while the conformal NL-ICA baseline degrades sharply compared to the $n \approx 32$ setting.}
\label{tab:source-recovery-n64}
\end{table}

\paragraph{Software assets}
\label{app:software-assets}
The experiments use synthetic data generated by scikit-learn \cite{scikit-learn}, numerical arrays and linear algebra from NumPy \cite{harris2020array} and SciPy \cite{virtanen2020scipy}, automatic differentiation and vectorization from JAX \cite{jax2018github}, neural network modules from Equinox \cite{kidger2021equinox}, optimization with Optax \cite{deepmind2020jax}, and plotting with Matplotlib \cite{Hunter:2007}.
We used the \texttt{local\_coordinates} library \cite{cunningham2026localcoordinates} as a reference for computing all of the geometric quantities used in our paper.
All of these are open source software packages.
The relevant project licenses are BSD 3-Clause for scikit-learn, NumPy, and SciPy, Apache License 2.0 for JAX, Equinox, Optax, and \texttt{local\_coordinates}, and the PSF based Matplotlib license.

\paragraph{LLM usage}
\label{app:llm-usage}
Large language models were used to help edit the writing and implement the experiments.

\newpage

\end{document}